\newcommand{\until}[1]{\{1,\dots, #1\}}
\newcommand{\subscr}[2]{#1_{\textup{#2}}}
 \newcommand{\setdef}[2]{\{#1
	\; | \; #2\}}
\newcommand{\map}[3]{#1: #2 \rightarrow #3}
\newcommand\oprocendsymbol{\hbox{$\triangle$}}
\newcommand\oprocend{\relax\ifmmode\else\unskip\hfill\fi\oprocendsymbol}
\DeclareSymbolFont{bbold}{U}{bbold}{m}{n}
\DeclareSymbolFontAlphabet{\mathbbold}{bbold}
\DeclareMathAlphabet{\mymathbb}{U}{BOONDOX-ds}{m}{n}
\newcommand{\vect}[1]{\mathbbold{#1}}
\newcommand{\vectorones}[1][]{\mymathbb{1}_{#1}}
\newcommand{\real}{\mathbb{R}}
\newcommand{\realpositive}{\mathbb{R}_{>0}}
\newcommand{\seminorm}[1]{{\left\vert\kern-0.25ex\left\vert\kern-0.25ex\left\vert #1
		\right\vert\kern-0.25ex\right\vert\kern-0.25ex\right\vert}}
\newcommand{\semimeasure}[1]{\mu_{\seminorm{\cdot}}\kern-0.5ex\left(#1\right)}
\newcommand{\Lip}{\operatorname{Lip}}
\newcommand{\norm}[2]{\|#1\|_{#2}}
\DeclareMathOperator{\diag}{diag}
\renewcommand{\top}{\mathsf{T}} 
\newcommand{\OF}{\mathsf{F}}
\newcommand{\OG}{\mathsf{G}}
\newcommand{\ON}{\mathsf{N}}
\newcommand{\IFN}{\mathsf{IFN}}
\newcommand{\WFN}{\mathsf{WFN}}
\newcommand{\FN}{\mathsf{FN}}
\newtheorem{theorem}{Theorem}[section]
\newtheorem{definition}[theorem]{Definition}
\newtheorem{remark}[theorem]{Remark}
\numberwithin{equation}{section}
\newcommand{\suchthat}{\;\ifnum\currentgrouptype=16 \middle\fi|\;}
\newcommand{\scirc}{\raise1pt\hbox{$\,\scriptstyle\circ\,$}}
\title{\LARGE \bf
Comparative Analysis of Interval Reachability for\\Robust Implicit and Feedforward Neural Networks 
}
\author{Alexander Davydov$^{1,*}$, Saber Jafarpour$^{2,*}$, Matthew Abate$^{2}$, Francesco Bullo$^{1}$, Samuel Coogan$^{2}$
\thanks{$^*$ These authors contributed equally and the order is alphabetical.}
\thanks{$^{1}$Alexander Davydov and Francesco Bullo are with the Center for Control, Dynamical Systems, and Computation, University of California, Santa Barbara,
         {\tt\small \{davydov, bullo\}@}ucsb.edu}%
\thanks{$^{2}$Saber Jafarpour, Matthew Abate, and Samuel Coogan are with the School of Electrical and Computer Engineering, Georgia Institute of Technology,
        {\tt\small \{saber, matt.abate, sam.coogan\}@}gatech.edu}%
}
\begin{document}

\maketitle
\thispagestyle{empty}
\pagestyle{empty}

\begin{abstract}
We use interval reachability analysis to obtain robustness guarantees for implicit neural networks (INNs). 
INNs are a class of implicit learning models that use implicit equations as layers and have been shown to exhibit several notable benefits over traditional deep neural networks.
We first establish that tight inclusion functions of neural networks, which provide the tightest rectangular over-approximation of an input-output map, lead to sharper robustness guarantees than the well-studied robustness measures of local Lipschitz constants.  
Like Lipschitz constants, tight inclusions functions are computationally challenging to obtain, and we thus propose using mixed monotonicity and contraction theory to obtain computationally efficient estimates of tight inclusion functions for INNs.
We show that our approach performs at least as well as, and generally better than, applying state-of-the-art interval bound propagation methods to INNs.
We design a novel optimization problem for training robust INNs
and we provide empirical evidence that suitably-trained INNs can be more robust than comparably-trained feedforward networks.
\end{abstract}


\section{Introduction}




Implicit neural networks (INNs) are a class of implicit learning models where the hidden layers are replaced with implicit equations~\cite{BA-JZK:17,SB-JZK-VK:19,LEG-FG-BT-AA-AYT:21}. 
Compared to their explicit counterparts, INNs are known to have advantages including (i) being more suitable for some problems such as constrained optimization problems~\cite{BA-JZK:17} (ii) being more memory efficient while maintaining comparable accuracy~\cite{SB-JZK-VK:19} (iii) allowing for new architecture possibilities~\cite{LEG-FG-BT-AA-AYT:21} (vi) showing improved training due to fewer vanishing and exploding gradients~\cite{AK-ZZ-VS:20}. 
Despite their benefits, INNs can suffer from well-posedness issues and convergence instabilities. Additionally, their input-output behavior may suffer from robustness issues and adversarial perturbations; indeed, such robustness vulnerabilities are a well-studied and major issue in traditional deep neural networks as well~\cite{CZ-WZ-IS-JB-DE-IG-RF:13}.

\paragraph*{Problem statement and motivation}

We aim to address two main challenges regarding robustness of neural networks: (i) designing algorithms to train provably robust INNs, and (ii) appropriately comparing the robustness of INNs with the robustness of their explicit counterparts. 


Obtaining provable robustness guarantees for learning algorithms has been a major goal in the machine learning literature~\cite{EW-ZK:18,AM-AM-LS-DT-AV:17}. For feedforward neural networks (FFNNs), three well-established methods for producing robustness guarantees include (i) Lipschitz bound approaches, (ii) interval bound propagation (IBP) methods, and (iii) convex-relaxation approaches. Lipschitz constants of neural networks are coarse but rigorous measures for their input-output robustness. While it has been shown that computing the exact Lipschitz constant of a neural network is NP-hard~\cite{AV-KS:18}, several efficient methods for providing sharp estimates of Lipschitz bounds are proposed in~\cite{PP-AK-JB-PK-FA:21-new,LPC-JCP:20}. 
IBP methods use interval analysis to provide box over-approximations of the reachable set of neural networks. These methods have been successfully used to train robust FFNNs~\cite{SG-etal:18,HZ-etal:20}. Convex-relaxation approaches are based on relaxations of nonlinear activation functions using either linear~\cite{CW-JZK:22} or quadratic constraints~\cite{MF-MM-GJP:20}. 

For robustness guarantees of INNs, several works provide estimates of their Lipschitz constants~\cite{CP-EW-JZK:21,SJ-AD-AVP-FB:21f,MR-RW-IRM:20-new}. However, global Lipschitz bounds 
do not provide information about the local sensitivity of the networks, for which local Lipschitz bounds are more informative but are harder to estimate. 
In~\cite{CW-JZK:22}, an iterative IBP approach for INNs is proposed, however, convergence of this iteration requires strong conditions which limit the expressivity of the resulting implicit models. In~\cite{TC-JBL-VM-EP:21}, an SDP-based method is proposed, however, this approach is computationally intensive and cannot be implemented in training.







\paragraph*{Contributions}

In this letter, we use interval reachability analysis to study robustness of INNs. We first introduce the notion of tight inclusion function associated to the INN that gives the tightest rectangular approximation for the input-output behavior of the INN. We show that tight inclusion functions are sharper than any robustness guarantees based on local Lipschitz bounds.  

Similar to Lipschitz constants, computing the tight inclusion function is computationally challenging. Instead, using mixed monotone systems theory and contraction theory, we provide computationally efficient estimates of the tight inclusion functions of INNs. 
Using two different interpretations of implicit neural networks, we compare our approach with the IBP approach for FFNNs. We show that our mixed monotone contracting approach is the natural extension of IBP methods to INNs and performs at least as well as, and generally better than, IBP methods naively applied to INNs. 


Lastly, we provide an algorithm to efficiently implement our mixed monotone contracting approach in the training optimization problem to design robust INNs. In numerical experiments, we compare the performance of INNs and FFNNs with a comparable number of parameters and demonstrate that suitably-trained INNs have improved certified and empirical robustness compared to their feedforward counterparts, even when trained with IBP. In the conference paper \cite{SJ-MA-AD-FB-SC:21y}, we focus on verifying robustness of INNs. In contrast, in this paper, we provide an efficient means for training robust INNs, compare theoretical robustness guarantees for INNs to FFNNs, extend~\cite[Theorem 1]{SJ-MA-AD-FB-SC:21y} and provide the proof for it, and present an empirical study of training for robustness.

\section{Mathematical Preliminary}
 For $x,y,z \in \real^n$, we write $x \leq y$ if $x_i \leq y_i$ for all $i \in \until{n}$ and $z \in [x,y]$ if $x \leq z \leq y$.
For $\eta\in \real^n_{>0}$, we define the diagonal matrix $[\eta]\in \real^{n\times n}$ by $[\eta]_{ii}=\eta_i$, for every $i\in \{1,\ldots,n\}$ and the diagonally weighted
$\ell_{\infty}$-norm by
$\|x\|_{\infty,[\eta]^{-1}}=\max_i|x_i|/\eta_i$, the diagonally weighted
$\ell_{\infty}$-matrix measure is defined by $\mu_{\infty,[\eta]^{-1}}(A) =\max_{i \in \until{n}} A_{ii} + \sum_{j \neq i} \frac{\eta_j}{\eta_i} |A_{ij}|$.
For any matrix $A\in \real^{n\times n}$, the spectral radius of $A$ is denoted by $\rho(A)$ and the elementwise absolute value of $A$ is denoted by $|A|\in \real^{n\times n}_{\ge 0}$. For two matrices $A,B$, let $A \otimes B$ denote their Kronecker product. Given a matrix
$B \in \mathbb{R}^{n\times m}$, we denote the non-negative part of $B$
by $[B]^+ := \max(B, 0)$ and the nonpositive part of $B$ by
$[B]^- := \min(B, 0)$. The Metzler and non-Metzler parts of a square matrix $A\in \real^{n\times n}$
are denoted by $\lceil A \rceil^{\mathrm{Mzl}}$ and
$\lfloor A \rfloor^{\mathrm{Mzl}}$, respectively, where
\begin{align*}
  (\lceil A \rceil^{\mathrm{Mzl}})_{ij} :=\begin{cases}
    A_{ij} & \text{if } A_{ij} \geq 0\; \mbox{or} \; i = j\\
    0 & \mbox{otherwise,}
  \end{cases}
\end{align*}
and $\lfloor A\rfloor^{\mathrm{Mzl}} := A-\lceil A \rceil^{\mathrm{Mzl}}$. The subset $\mathcal{T}^n\subset \real^{2n}$ is defined by $\mathcal{T}^{n} := \setdef{(x,\widehat{x})\in \real^{2n}}{x\le \widehat{x}}$. Given a map $f:\real^n\to \real^m$, a set $\mathcal{U}\subset \real^n$, and $p\in [1,\infty]$, the $\ell_p$-Lipschitz constant of $f$ on $\mathcal{U}$ is the smallest $\Lip^{\mathcal{U}}_p(f)\in \real_{\ge 0}$ such that
    $\|f(x)-f(y)\|_p \le \Lip^{\mathcal{U}}_p(f)\|x-y\|_p$, for all $x,y\in \mathcal{U}$.
Given a vector-valued map $f:\real^n\to \real^n$ and $\alpha\in [0,1]$, the $\alpha$-average map $f_{\alpha}:\real^n\to \real^n$ is defined by $f_{\alpha}(x)=(1-\alpha)x + \alpha f(x)$, for every $x\in \real^n$.

\section{Inclusion Functions}

Given a mapping $f$, an $\ell_{\infty}$-box over-approximation of the image of $f$ is attainable via an inclusion function. 

\begin{definition}[Inclusion function]\label{def:inclusion}
Let $f:\real^r\to \real^q$ be a mapping. Then $\OF=
\left[\begin{smallmatrix}\underline{\OF}\\ \overline{\OF}\end{smallmatrix}\right]:\mathcal{T}^r\to \real^{2q}$ is  an inclusion function for $f$, if, for every $x\le y \le \widehat{x}$, 
\begin{enumerate}
    \item  $\underline{\OF}(y,y)\ge \underline{\OF}(x,\widehat{x})$ and $\overline{\OF}(y,y)\le \overline{\OF}(x,\widehat{x})$; 
    \item $\underline{\OF}(x,x)=\overline{\OF}(x,x)=f(x)$. 
\end{enumerate}
Moreover, the inclusion function $\OF$ for $f$ is called \emph{tight}, if
\begin{enumerate}\setcounter{enumi}{2}
    \item for every inclusion function $\OG= 
    \left[\begin{smallmatrix}\underline{\OG}\\ \overline{\OG}\end{smallmatrix}\right]:\mathcal{T}^r\to \real^{2q}$ of $f$, we have
    \begin{align*}
        \underline{\OG}(x,\widehat{x})\le \underline{\OF}(x,\widehat{x}),\quad
        \overline{\OF}(x,\widehat{x})\ge \overline{\OG}(x,\widehat{x}),\quad\mbox{for all  }x\le \widehat{x}
    \end{align*} 
\end{enumerate}
\end{definition}
If $\OF$ is an inclusion function for $f$, then it is easy to see that,
\begin{align}\label{eq:interval-bound}
    f([x,\widehat{x}])\subseteq [\underline{\OF}(x,\widehat{x}),\overline{\OF}(x,\widehat{x})],\quad \mbox{for all }x\le \widehat{x}.
\end{align}
 If an inclusion function $\OF$ for $f$ is tight, then it provides the tightest interval over-approximation as in~\eqref{eq:interval-bound} compared to all the other inclusion functions for $f$. Given a map $f:\real^r\to \real^q$, one can use~\cite[Theorem 1]{LY-NO:19} to compute the tight inclusion function $\OF =\left[\begin{smallmatrix}\underline{\OF}\\ \overline{\OF}\end{smallmatrix}\right]$ for $f$, component-wise. Indeed, for every $i\in \{1,\ldots,n\}$, one can show that:
\begin{align}\label{eq:tight-inclusion}
    \underline{\OF}_i(x,\widehat{x})= \min_{z\in [x,\widehat{x}]} f_i(z),\quad
    \overline{\OF}_i(x,\widehat{x})=\max_{z\in [x, \widehat{x}]} f_i(z)
\end{align} 
 The next Theorem studies the connection between the local Lipschitz constants and the tight inclusion functions.

\begin{theorem}[Inclusion function vs. Lipschitz constant]\label{thm:LipInc}
Let $f:\real^r\to \real^q$ be a continuous mapping and $\OF=\left[\begin{smallmatrix}\underline{\OF}\\ \overline{\OF}\end{smallmatrix}\right]:\mathcal{T}^r\to \real^{2q}$ be the tight inclusion function for $f$. Then, for every $\underline{x}\le \overline{x}$, we have 
\begin{equation*}
    \|\overline{\OF}(\underline{x},\overline{x})-\underline{\OF}(\underline{x},\overline{x})\|_{\infty} \le \Lip^{[\underline{x},\overline{x}]}_{\infty}(f) \|\underline{x}-\overline{x}\|_{\infty}.
\end{equation*}
\end{theorem}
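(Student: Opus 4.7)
The plan is to use the component-wise characterization of the tight inclusion function given in equation~\eqref{eq:tight-inclusion}, namely $\underline{\OF}_i(\underline{x},\overline{x}) = \min_{z \in [\underline{x},\overline{x}]} f_i(z)$ and $\overline{\OF}_i(\underline{x},\overline{x}) = \max_{z \in [\underline{x},\overline{x}]} f_i(z)$. Since $f$ is continuous and $[\underline{x},\overline{x}]$ is compact, for every $i \in \{1,\dots,q\}$ both extrema are attained, say at points $z^\star_i, w^\star_i \in [\underline{x},\overline{x}]$ with $\underline{\OF}_i(\underline{x},\overline{x}) = f_i(z^\star_i)$ and $\overline{\OF}_i(\underline{x},\overline{x}) = f_i(w^\star_i)$.

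Next I would estimate each component of the gap. For a fixed $i$, chain the inequalities
\[
\overline{\OF}_i(\underline{x},\overline{x}) - \underline{\OF}_i(\underline{x},\overline{x})
= f_i(w^\star_i) - f_i(z^\star_i)
\le \|f(w^\star_i) - f(z^\star_i)\|_\infty
\le \Lip^{[\underline{x},\overline{x}]}_\infty(f)\,\|w^\star_i - z^\star_i\|_\infty,
\]
using that a single component is bounded by the $\ell_\infty$-norm and then applying the definition of the local $\ell_\infty$-Lipschitz constant on $[\underline{x},\overline{x}]$.

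The last step is the geometric observation that since both $w^\star_i$ and $z^\star_i$ lie in the box $[\underline{x},\overline{x}]$, we have componentwise $|w^\star_{i,j} - z^\star_{i,j}| \le \overline{x}_j - \underline{x}_j$, hence $\|w^\star_i - z^\star_i\|_\infty \le \|\overline{x} - \underline{x}\|_\infty$. Combining and taking the maximum over $i$ yields the claimed inequality. Since $\overline{\OF}_i \ge \underline{\OF}_i$ componentwise (from property~(ii) of Definition~\ref{def:inclusion} applied with the inclusion axioms), the absolute values in the $\ell_\infty$-norm on the left-hand side are harmless.

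I do not expect any real obstacle here: the only subtlety is ensuring the extrema in~\eqref{eq:tight-inclusion} are attained, which follows immediately from continuity of $f$ and compactness of $[\underline{x},\overline{x}]$. The argument is essentially three lines once the characterization~\eqref{eq:tight-inclusion} is invoked, and it makes transparent why the tight inclusion function is a sharper robustness measure than the local Lipschitz constant, as advertised in the paragraph preceding the theorem.
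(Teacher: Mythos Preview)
Your proposal is correct and follows essentially the same approach as the paper's proof: both use the characterization~\eqref{eq:tight-inclusion}, invoke continuity and compactness to realize the extrema at points in the box, bound the component gap by the full $\ell_\infty$-norm of $f$ evaluated at those points, and then apply the local Lipschitz bound together with the diameter estimate $\|w_i^\star - z_i^\star\|_\infty \le \|\overline{x}-\underline{x}\|_\infty$. The only cosmetic difference is that the paper first fixes the index $i$ achieving the $\ell_\infty$-norm on the left and argues once, whereas you bound every component and take the maximum at the end.
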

\begin{proof} Let $i\in \{1,\ldots,k\}$ be such that $\|\overline{\OF}(\underline{x},\overline{x})-\underline{\OF}(\underline{x},\overline{x})\|_{\infty} = \left|\overline{\OF}_i(\underline{x},\overline{x})-\underline{\OF}_i(\underline{x},\overline{x})\right|$. 
Note that since $f$ is continuous and the box $[\underline{x},\overline{x}]$ is compact, there exist $\eta^*,\xi^*\in [\underline{x},\overline{x}]$ such that 
    $\max_{y\in [\underline{x},\overline{x}]}f_i(y) =f_i(\eta^*), \min_{y\in [\underline{x},\overline{x}]}f_i(y)=f_i(\xi^*).$
This implies that $\|\overline{\OF}(\underline{x},\overline{x})-\underline{\OF}(\underline{x},\overline{x})\|_{\infty}= |f_i(\eta^*)-f_i(\xi^*)| \le  \|f(\xi^*)-f(\eta^*)\|_{\infty} \le 
   \Lip^{[\underline{x},\overline{x}]}_{\infty}\|\xi^*-\eta^*\|_{\infty} \le \Lip^{[\underline{x},\overline{x}]}_{\infty}(f) \|\underline{x}-\overline{x}\|_{\infty}$. 
\end{proof}

\begin{remark}[Tight inclusion functions] {\ } 
\begin{enumerate}
    \item In general, 
    finding tight inclusion functions using~\eqref{eq:tight-inclusion} is not computationally tractable. This motivates developing efficient methods for estimating the tight inclusion function. 
    \item Theorem~\ref{thm:LipInc} shows that a tight inclusion function for $y=f(x)$ will provide a tighter over-approximation of the image of $f$ than is attainable from local Lipschitz constants of $f$. 
\end{enumerate}
\end{remark}

  \section{Implicit Neural Networks}
  We consider the implicit neural network 
  \begin{align}\label{eq:INN}
  z&=\sigma(Wz+Ux+b):=\ON(z,x),\nonumber\\
  y&= Cz+c,
  \end{align}
  where $z\in \real^n$ is the hidden variable, $x\in \real^r$ is the input, $y\in \real^{q}$ is the output, $W\in \real^{n\times n}$, $U\in \real^{n\times r}$, and $C\in \real^{q\times n}$ are the weight matrices and $b\in \real^n$ and $c\in \real^q$ are bias vectors. Moreover, $\sigma$ is a diagonal activation function (e.g., ReLU) defined by $\sigma(z_1,\ldots,z_n)=(\sigma_1(z_1),\ldots,\sigma_n(z_n))^{\top}$, where for every $i\in \{1,\ldots,n\}$, the activation function  $\sigma_i:\real\to \real$ satisfies $0\le \frac{\sigma_i(x)-\sigma_i(y)}{x-y}\le 1$ for all $x\neq y \in \real$. Compared to traditional neural networks, INNs replace the layers with a fixed-point equation. This change in the structure is known to allow for new architecture possibilities and provide alternative approaches to deep modeling.   

   \paragraph*{Generalized architecture} Notably, FFNNs can be considered as special cases of INNs~\cite{LEG-FG-BT-AA-AYT:21}. Consider the FFNN 
 \begin{align}\label{eq:FFNN}
     z^{i} &= \sigma(W_i z^{i-1} + b_i)=:\FN_i(z^{i-1}), \qquad i\in \{1,\ldots,k\}, \nonumber\\
     y &= C z^k + c
 \end{align}
 where $z_0=x\in \real^r$ is the input. For every $i\in \{1,\ldots,k\}$, $z^i\in \real^{n_i}$, $W_i\in \real^{n_{i-1}\times n_{i}}$, and $b_i\in \real^{n_{i-1}}$ are the weights, the biases, and the hidden variables in the $i$-th layer of the network, respectively. Finally, $y\in \real^q$ is the output, $C$ and $c$ are the output layer's weight matrix and bias vector. The FFNN~\eqref{eq:FFNN} is equivalent to the following INN
 \begin{align}\label{eq:associated-INN}
     z &= \sigma (W^{\FN} z + U^{\FN}x + b)=: \IFN(z,x),\nonumber \\  y &= C^{\ON}z + c
 \end{align}
 where $z=[z_k,\ldots,z_1]^{\top}$, $b=[b_k,\ldots,b_1]^{\top}$, and $W^{\FN}$, $U^{\FN}$, and $C^{\FN}$ are defined as follows:
 \begin{align}\label{eq:def-WFN}
    W^{\FN} &= \begin{bmatrix}
     \vect{0} & W_k & \vect{0} & \cdots & \vect{0} \\ 
     \vect{0} & \vect{0} & W_{k-1} & \cdots & \vect{0}\\
     \vdots & \vdots & \vdots &\ddots & \vdots\\
     \vect{0} & \vect{0} & \vect{0} & \cdots & W_1 \\
     \vect{0} & \vect{0} & \vect{0} & \cdots & \vect{0}
    \end{bmatrix}, \quad U^{\FN} = \begin{bmatrix}
     \vect{0}\\ \vect{0} \\ \vdots \\ \vect{0} \\ W_{0} 
    \end{bmatrix}\nonumber\\
    C^{\FN} &= \begin{bmatrix}
     \vect{0} & \vect{0} & \vect{0} & \cdots & C
    \end{bmatrix}.
    \end{align}
    Using this perspective, implicit neural networks generalize FFNNs by allowing arbitrary interconnections between layers leading to full weight matrices $W$, $U$, and $C$.
    
    \paragraph*{Alternative deep modeling}
    By replacing the notion of layer with an algebraic equation, implicit neural networks provide a novel perspective toward deep modeling. Consider the class of 
    FFNN where weights and biases are equal for each layer (i.e., the network is weight-tied) and the input is injected to each layer  given by 
\begin{align}\label{eq:weight-tied}
    z^{i} &= \sigma(Wz^{i-1} + Ux + b):=\WFN(z^{i-1},x), \; i \in \until{k},\nonumber\\
    y &= Cz_k + c
\end{align}
where $z^0 = x$. While weight-tying may appear restrictive, it is usually considered as a form of regularization that stabilizes training and significantly reduces the model size~\cite{SB-JZK-VK:19}. If the depth of the network increases, i.e., $k\to \infty$ and the  iteration~\eqref{eq:weight-tied} converges, then the weight-tied input-injected neural network~\eqref{eq:weight-tied} is equivalent to the implicit neural network~\eqref{eq:INN}. Using this perspective, INNs provide a depth-independent alternative to deep FFNNs. 



Suppose that, for every input $x\in \real^r$, the implicit neural network~\eqref{eq:INN} has a unique fixed point $z^*(x)\in \real^n$. Then, the  \emph{input-output map} $f:\real^r\to \real^q$ is given by 
\begin{align}\label{eq:INN-input-output}
f(x) := y = Cz^*(x) + c. 
\end{align}
In the next section, our goal is to provide estimates for the tight inclusion function of the input-output map $f$.

\section{Reachability Analysis of Implicit Neural Networks}\label{sec:reachability}

In this section, we use mixed monotone system theory to present a framework for estimating the tight inclusion function for the input-output map of INNs. Given an implicit neural network~\eqref{eq:INN} and input bounds $\underline{x}\leq \overline{x}\in \real^r$, we first introduce the embedding map $\ON^{\mathrm{E}}:\real^{2n}\times \real^{2r}\to \real^n$ by
      \begin{multline*}
         \ON^{\mathrm{E}}(\underline{z},\overline{z},\underline{x},\overline{x})=\sigma(\lceil W \rceil^{\mathrm{Mzl}} \underline{z}+\lfloor W \rfloor^{\mathrm{Mzl}} \overline{z} + \\ 
          [U]^{+}\underline{x} + [U]^{-}\overline{x} + b).
      \end{multline*}
Using the embedding map $\ON^{\mathrm{E}}$, we define the \emph{embedded implicit neural network} associated to~\eqref{eq:INN} by 
\begin{align}\label{eq:INN-embedding}
        \begin{bmatrix}\underline{z}\\\overline{z}\end{bmatrix}
        = \begin{bmatrix}\ON^{\mathrm{E}}(\underline{z},\overline{z},\underline{x},\overline{x})\\ \ON^{\mathrm{E}}(\overline{z},\underline{z},\overline{x},\underline{x})\end{bmatrix}, \;\;
        \begin{bmatrix}\underline{y}\\\overline{y}\end{bmatrix} =
        \begin{bmatrix}[C]^+ & [C]^- \\ [C]^- & [C]^+\end{bmatrix}  \begin{bmatrix}\underline{z}\\\overline{z}\end{bmatrix} +
        \begin{bmatrix}c\\ c\end{bmatrix}. 
      \end{align}
      The embedded INN~\eqref{eq:INN-embedding} can be considered as an INN with the box input $[\underline{x}, \overline{x}]$ and the box output $[\underline{y},\overline{y}]$. In the next theorem, we use the embedded system~\eqref{eq:INN-embedding} to obtain an inclusion function for the input-output map~\eqref{eq:INN-input-output}.
      
  \begin{theorem}[Inclusion function via embedded network]\label{thm:L4DC}
Consider the implicit neural network~\eqref{eq:INN} and its associated embedded implicit neural network~\eqref{eq:INN-embedding}. Suppose that there exists $\eta\in \real^n_{>0}$ such that $\mu_{\infty,[\eta]^{-1}}(W)<1$. For every $\underline{x}\le x\le \overline{x}$, and every $\alpha\in (0, \alpha^*:=[1-\min_{i\in \{1,\ldots,n\}}[W_{ii}]^{-}]^{-1}]$, the following statements hold:
\begin{enumerate}
          \item\label{p1:L4DC} the iterations
          $
          \left[\begin{smallmatrix}\underline{z}^{k+1} \\ \overline{z}^{k+1}\end{smallmatrix}\right] = 
          \left[\begin{smallmatrix}\ON^{\mathrm{E}}_{\alpha}(\underline{z}^k,\overline{z}^k,\underline{x},\overline{x})\\
          \ON^{\mathrm{E}}_{\alpha}(\overline{z}^k,\underline{z}^k,\overline{x},\underline{x})\end{smallmatrix}\right]$ are contracting with respect to the norm  $\|\cdot\|_{\infty,I_2\otimes [\eta]^{-1}}$ and converge to the unique fixed-point $\begin{bmatrix}
          \underline{z}^*\\
           \overline{z}^*
          \end{bmatrix}$ of the embedded INN~\eqref{eq:INN-embedding}; 
          \item\label{p2:L4DC} the iterations $z^{k+1}=\ON_{\alpha}(z^k,x)$ are contracting with respect to the norm $\|\cdot\|_{\infty,[\eta]^{-1}}$ and converges to the unique fixed point $z^*\in [\underline{z}^*,\overline{z}^*]$ of the INN~\eqref{eq:INN};
          \item\label{p3:L4DC} the map $\OF^{\ON}=\left[\begin{smallmatrix}
            \underline{\OF}^{\ON}\\
             \overline{\OF}^{\ON}
          \end{smallmatrix}\right]:\mathcal{T}^{r}\to \real^{2q}$ defined by 
          \begin{align}\label{eq:inclusion-INN}
              \underline{\OF}^{\ON}(\underline{x},\overline{x}) &= [C]^{+}\underline{z}^*+[C]^{-}\overline{z}^*+c\nonumber\\
               \overline{\OF}^{\ON}(\underline{x},\overline{x}) &= [C]^{+}\overline{z}^*+[C]^{-}\underline{z}^*+c
          \end{align}
          is an inclusion function for $f$ defined in~\eqref{eq:INN-input-output}.
\end{enumerate}
  \end{theorem}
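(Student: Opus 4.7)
The plan is to establish parts (i) and (ii) via matrix-measure contraction theory, and part (iii) via a mixed-monotone comparison argument combined with the $[C]^\pm$ decomposition of the output map.

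For part (i), I compute the Jacobian of the combined embedded $\alpha$-averaged iteration with respect to $(\underline{z},\overline{z})$. Writing $D_1,D_2$ for the diagonal matrices of slopes of $\sigma$ (entries in $[0,1]$), this Jacobian has a $2\times 2$ block structure built from $(1-\alpha)I + \alpha D_k\lceil W\rceil^{\mathrm{Mzl}}$ on the diagonal blocks and $\alpha D_k\lfloor W\rfloor^{\mathrm{Mzl}}$ on the off-diagonal blocks. The crucial identity is $|\lceil W\rceil^{\mathrm{Mzl}}_{ij}| + |\lfloor W\rfloor^{\mathrm{Mzl}}_{ij}| = |W_{ij}|$ for $i\neq j$, together with $\lceil W\rceil^{\mathrm{Mzl}}_{ii} = W_{ii}$. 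A row-wise computation of $\|\cdot\|_{\infty,\, I_2\otimes [\eta]^{-1}}$ then collapses the four block contributions into a single expression of the form $(1-\alpha) + \alpha (D_k)_{ii}\bigl(W_{ii} + \sum_{j\neq i}(\eta_j/\eta_i)|W_{ij}|\bigr)$, provided the absolute value on the diagonal entry can be dropped. The condition $\alpha\leq\alpha^* = 1/(1-\min_i[W_{ii}]^-)$ is exactly what guarantees $(1-\alpha) + \alpha (D_k)_{ii} W_{ii}\geq 0$ uniformly over $(D_k)_{ii}\in[0,1]$, and the resulting quantity is bounded by $1 - \alpha\bigl(1 - \max(\mu_{\infty,[\eta]^{-1}}(W),0)\bigr) < 1$. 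Banach's theorem then yields the unique $(\underline{z}^*,\overline{z}^*)$. The analogous and simpler computation on $(1-\alpha)I + \alpha DW$ proves the contraction half of part (ii).

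To complete part (ii), I initialize all three $\alpha$-averaged iterations from a common point and prove order preservation $\underline{z}^k\leq z^k\leq \overline{z}^k$ by induction on $k$. The inductive step reduces to showing, for $\underline{z}\leq z\leq\overline{z}$, $\underline{x}\leq x\leq\overline{x}$, and $\alpha\leq\alpha^*$, the sandwich
\[
(1-\alpha)\underline{z} + \alpha\ON^{\mathrm{E}}(\underline{z},\overline{z},\underline{x},\overline{x}) \leq (1-\alpha)z + \alpha\ON(z,x) \leq (1-\alpha)\overline{z} + \alpha\ON^{\mathrm{E}}(\overline{z},\underline{z},\overline{x},\underline{x}).
\]
I expect this to be the main obstacle: the embedding $\ON^{\mathrm{E}}$ is not monotone along its diagonal self-loops when $W_{ii}<0$, so a naive mixed-monotonicity argument breaks down. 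The $\alpha$-averaging with $\alpha\leq\alpha^*$ is the precise remedy. Component-wise, using that $\sigma_i$ is nondecreasing with slope in $[0,1]$ and the sign-splittings built into $\lceil W\rceil^{\mathrm{Mzl}}$, $\lfloor W\rfloor^{\mathrm{Mzl}}$ and $[U]^\pm$, the difference of $\sigma_i$'s is controlled by $|W_{ii}|(z_i-\underline{z}_i)$ when $W_{ii}<0$, and the overall inequality reduces to the positivity requirement $(1-\alpha)+\alpha W_{ii}\geq 0$, which is again $\alpha\leq \alpha^*$. Passing to the limit $k\to\infty$ gives $z^*\in[\underline{z}^*,\overline{z}^*]$.

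For part (iii), the axioms in Definition~\ref{def:inclusion} follow readily. The boundary condition at $\underline{x}=\overline{x}=x$ holds because the embedded system collapses to the original (since $[U]^+x + [U]^-x = Ux$ and $\lceil W\rceil^{\mathrm{Mzl}}z + \lfloor W\rfloor^{\mathrm{Mzl}}z = Wz$), so by uniqueness of the contracting fixed point $\underline{z}^*=\overline{z}^*=z^*(x)$, giving $\underline{\OF}^{\ON}(x,x) = \overline{\OF}^{\ON}(x,x) = Cz^*(x)+c = f(x)$. For the monotonicity axiom, using $\underline{z}^*\leq z^*(x)\leq\overline{z}^*$ from part (ii) together with $[C]^+\geq 0$ and $[C]^-\leq 0$ componentwise yields $f(x) - \underline{\OF}^{\ON}(\underline{x},\overline{x}) = [C]^+(z^*(x)-\underline{z}^*) + [C]^-(z^*(x)-\overline{z}^*)\geq 0$ and symmetrically the upper bound. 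Applying this with $\underline{x}=\overline{x}=y\in[x,\widehat{x}]$ then verifies the first monotonicity property in Definition~\ref{def:inclusion}, completing the proof that $\OF^{\ON}$ is an inclusion function for $f$.
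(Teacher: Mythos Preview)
Your proposal is correct and follows essentially the same approach as the paper's proof: matrix-measure contraction for parts (i)--(ii), an induction on the $\alpha$-averaged iterates to establish the sandwich $\underline{z}^k\le z^k\le\overline{z}^k$, and direct verification of Definition~\ref{def:inclusion} for part (iii). The paper packages the contraction estimate by writing $\tilde{\sigma}(\mathbf{p})-\tilde{\sigma}(\mathbf{q})=\Theta(\mathbf{p}-\mathbf{q})$ for a diagonal $\Theta$ with entries in $[0,1]$ and then invoking matrix-measure identities, whereas you carry out the equivalent row-wise computation of $\|\cdot\|_{\infty,I_2\otimes[\eta]^{-1}}$ directly via the identity $|\lceil W\rceil^{\mathrm{Mzl}}_{ij}|+|\lfloor W\rfloor^{\mathrm{Mzl}}_{ij}|=|W_{ij}|$; both routes land on the same contraction factor $1-\alpha(1-\mu_{\infty,[\eta]^{-1}}(W)^+)$ and the same role for $\alpha^*$.
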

\begin{remark}[Mixed monotone contracting approach]
  Theorem~\ref{thm:L4DC} can be interpreted as a dynamical
      system approach to study robustness of INNs. Indeed, the $\alpha$-average
      iteration in part~\ref{p2:L4DC} is the forward Euler
      discretization of the dynamical system $\frac{dz}{dt}=-z+\ON(z,x)$. The convergence of the iterations is due to the contraction property of the dynamical system and the estimate for the inclusion function is due to the mixed monotonicity of the dynamical system associated with the embedded INN~\cite{SJ-MA-AD-FB-SC:21y}. 
\end{remark}
  \begin{proof}
Regarding part~\ref{p1:L4DC},  we define $\tilde{\sigma} = I_2\otimes \sigma$, the map $\OG:\real^{2n}\to \real^{2n}$ by $\OG(\underline{z},\overline{z}) = \left[\begin{smallmatrix}\lceil W \rceil^{\mathrm{Mzl}} \underline{z} +\lfloor W \rfloor^{\mathrm{Mzl}}\overline{z}\\
          \lfloor W \rfloor^{\mathrm{Mzl}}\underline{z} +
          \lceil W
            \rceil^{\mathrm{Mzl}}\overline{z}\end{smallmatrix}\right]$ and the matrices $D = \left[\begin{smallmatrix}[U]^{+} & [U]^{-} \\ [U]^{-} & [U]^{+}\end{smallmatrix}\right]$ and $w=[\underline{x},\overline{x}]^\top$. Then define $ \tilde{\sigma}^{\OG}: \real^{2n}\to \real^{2n}$ as follows
            \begin{align*}
             \tilde{\sigma}^{\OG}(\underline{z},\overline{z},w) :=   \begin{bmatrix}
           \ON^{\mathrm{E}}(\underline{z},\overline{z},\underline{x},\overline{x})\\
          \ON^{\mathrm{E}}(\overline{z},\underline{z},\overline{x},\underline{x})
          \end{bmatrix} = \tilde{\sigma}(\OG(\underline{z},\overline{z})+Dw+I_2\otimes b).
            \end{align*}
             The assumptions on each
        scalar activation function imply that (i)
        $\map{\tilde{\sigma}}{\real^{2n}}{\real^{2n}}$ is non-expansive with respect
        to $\|\cdot\| := \norm{\cdot}{\infty,I_2\otimes [\eta]^{-1}}$ and (ii) for every
        $p,q \in \real$, there exists $\theta_i \in [0,1]$ such that
        $\sigma_i(p) - \sigma_i(q) = \theta_i(p - q)$ or in the matrix
        form $\tilde{\sigma}(\mathbf{p})-\tilde{\sigma}(\mathbf{q}) = \Theta (\mathbf{p}-\mathbf{q})$ where $\Theta\in \real^{2n\times 2n}$ is a
        diagonal matrix with diagonal elements $\theta_i\in
        [0,1]$ and $\mathbf{p},\mathbf{q}\in \real^{2n}$. As a result, for every $y_1,y_2\in \real^{2n}$, we have
         \begin{multline*}
          \|\tilde{\sigma}^{\OG}_{\alpha}(y_1,w)-\tilde{\sigma}^{\OG}_{\alpha}(y_2,w)\|
          \\ = \|(1-\alpha)(y_1-y_2) + \alpha \Theta
            (\OG(y_1)-\OG(y_2))\|
            \\  \le \sup_{y\in \real^{2n}}\|(1-\alpha)I_{2n} + \alpha
                                                            \Theta D \OG(y)\|
                                                            \|y_1-y_2\|
        \end{multline*}
         where the inequality holds by the mean value theorem. Then, for
        every $\alpha \in (0,[1-\min_{i}\inf_{y\in \real^{2n}}
          (\Theta D \OG (y))_{ii}]^{-1}]$,
        \begin{align*}
          \|I_{2n} + &\alpha(-I_{2n}+\Theta D\OG(y))\|
          \\ & = 1 +
          \alpha \mu_{\infty,I_2\otimes[\eta]^{-1}}\big(-I_{2n}+\Theta D\OG(y)\big) \\
          &
          =
          1
          +
          \alpha
          \big(-1
          +
          \mu_{\infty,I_2\otimes[\eta]^{-1}}(\Theta D\OG(y))\big)\\ &
          \le
          1
          +
          \alpha \big(-1+\mu_{\infty,I_2\otimes [\eta]^{-1}}(D\OG(y))^{+}\big) \\ & \le 1 -
          \alpha (1-\mu_{\infty,[\eta]^{-1}}(W)^{+}) < 1, 
        \end{align*} 
        where the first equality holds by~\cite[Lemma 7(i)]{SJ-AD-AVP-FB:21f}, the second
        equality holds by translation property of matrix
        measures, the third
        inequality holds by~\cite[Lemma 8(i)]{SJ-AD-AVP-FB:21f}, and the fourth inequality holds by the definition of matrix measure. Moreover, since $\theta_i\in
        [0,1]$, we have $\theta_i (D\OG)_{ii} \ge (D\OG)_{ii}^-$, for every
        $i\in \{1,\ldots,2n\}$. This means that 
        \begin{align*}
          \inf_{y\in \real^{2n}}
          (\Theta D \OG (y))_{ii} \ge  \inf_{y\in \real^{2n}}
          (D\OG_{ii}(y))^- = (W_{ii})^-. 
          \end{align*}
        This implies that,  for
        every $\alpha \in (0,\alpha^*]$,
        \begin{multline*}
          \|\tilde{\sigma}^{\OG}_{\alpha}(x_1,u)-\tilde{\sigma}^{\OG}_{\alpha}(x_2,u)\|
          \\ \le
          (1 -\alpha (1-\mu_{\infty,[\eta]^{-1}}(W)^{+})) \|x_1-x_2\|
        \end{multline*}
        Since $1 -\alpha (1-\mu_{\infty,[\eta]^{-1}}(W)^{+}) < 1$,
        $\tilde{\sigma}^{\OG}_{\alpha}(\cdot,w)$ is a contraction mapping with respect to $\|\cdot\|_{\infty,I_2\otimes [\eta]^{-1}}$ for every
        $\alpha \in (0,\alpha^*]$. It is easy to see that $\tilde{\sigma}^{\OG}_{\alpha}$ and $\tilde{\sigma}^{\OG}$ have the identical fixed-points, for every $\alpha\in [0,1]$. Therefore the iterations in part~\ref{p1:L4DC} converge to the unique fixed point of the embedded INN~\eqref{eq:INN-embedding}. Regarding part~\ref{p2:L4DC}, the proof follows by applying the same argument as in the proof of part~\ref{p1:L4DC} and using $\sigma(Wz+Ux+b)$ instead of $\tilde{\sigma}^{\OG}(\underline{z},\overline{z},\underline{x},\overline{x})$. Now, we show that $\underline{z}^*\le z^*\le \overline{z}^*$. We choose the initial condition $\left[\begin{smallmatrix}\underline{z}^0 \\ \overline{z}^0\end{smallmatrix}\right]$ for the iterations in part~\ref{p1:L4DC} and choose an initial condition $z^{0}\in \real^n$ satisfying $\underline{z}^0\le z^{0} \le \overline{z}^0$ for the iterations in part~\ref{p2:L4DC}. We prove by induction that, for every $k\in \mathbb{Z}_{\ge 0}$, we have $\underline{z}^{k}\le z^{k}\le \overline{z}^{k}$. Suppose that this claim is true for $k\in \{1,\ldots,m\}$ and we show that this claim is true for $k=m+1$. We first define $p=\lceil W \rceil^{\mathrm{Mzl}} \underline{z}^m +\lfloor W \rfloor^{\mathrm{Mzl}}\overline{z}^m + [U]^+\underline{x} + [U]^-\overline{x} +b $ and $q=W z^m +Ux + b$. Then we have
        \begin{align*}
         \underline{z}^{m+1}-z^{m+1} & =(1-\alpha^*)(\underline{z}^{m}-z^m) + \alpha^* (\sigma(p) - \sigma(q))\\ &= \left((1-\alpha^*)I_n + \alpha^*\Theta \lceil W \rceil^{\mathrm{Mzl}}\right)(\underline{z}^{m}-z^m) \\ & + \alpha^*\Theta \lfloor W \rfloor^{\mathrm{Mzl}} (\overline{z}^m-z^m) \\ &+ \alpha^*\Theta[U]^{+}(\underline{x}-x) + \alpha^*\Theta[U]^-(\overline{x}-x), 
        \end{align*}
        where the non-negative diagonal matrix $\Theta =\diag(\theta_i)\in \real^n$ is defined as follows: for every $i\in \{1,\ldots,n\}$, $\theta_i\in [0,1]$ is such that $\sigma_i(p_i)-\sigma_i(q_i) = \theta_i(p_i-q_i)$. Moreover, we know that $\Theta \lfloor W \rfloor^{\mathrm{Mzl}}\le \vect{0}_{n\times n}$ and, for every $i\in \{1,\ldots,n\}$,
        \begin{align*}
            (1-\alpha^*) + \alpha^*\theta_i W_{ii} \ge (1-\alpha^*) + \alpha^* W_{ii}^{-} \ge 0.
        \end{align*}
        This implies that $(1-\alpha^*)I_n + \alpha^*\Theta \lceil W \rceil^{\mathrm{Mzl}}\ge \vect{0}_{n\times n}$. Additionally, we have $\Theta [U]^{+}\ge \vect{0}_{n\times r}$ and $\Theta [U]^{-}\le \vect{0}_{n\times r}$. Therefore, using the induction assumption, we get $\underline{z}^{m+1}-z^{m+1}\le \vect{0}_n$. Similarly, one can show that $z^{m+1}-\overline{z}^{m+1}\le \vect{0}_n$. 
        As a consequence,
            $\underline{z}^* = \lim_{k\to \infty}\underline{z}^{k} \le \lim_{k\to \infty} z^k =z^* \le \lim_{k\to \infty}\overline{z}^{k}= \overline{z}^*.$
        This proves part~\ref{p2:L4DC}. The proof of part~\ref{p3:L4DC} follows easily from parts~\ref{p1:L4DC} and~\ref{p2:L4DC} and by checking the properties of inclusion functions from Definition~\ref{def:inclusion}. 
        \end{proof}


\section{Feedforward vs. implicit neural networks }

In this section, we compare the robust training framework developed in Sections~\eqref{sec:reachability} and~\eqref{sec:train} with the IBP approach developed in~\cite{SG-etal:18}. Consider a $k$-layer FFNN with input-output map $f(x)=:y$ defined by~\eqref{eq:FFNN}.
For every $i\in \{1,\ldots,k\}$, following~\cite{SG-etal:18,HZ-etal:20}, we can obtain layer-wise upper and lower bounds for the hidden variables as follows: 
 \begin{align*}
     \underline{z}^{i+1}:=\FN_i^{\mathrm{E}}(\underline{z}^i,\overline{z}^i) = \sigma( [W_i]^+ \underline{z}^i+[W_i]^- \overline{z}^i + b_i),\nonumber\\
     \overline{z}^{i+1}:=\FN_i^{\mathrm{E}}(\overline{z}^i,\underline{z}^i) = \sigma( [W_i]^+ \overline{z}^i+[W_i]^- \underline{z}^i + b_i).
 \end{align*}
By applying this bounding technique recursively, one can obtain the upper bound $\overline{y}$ and lower bound $\underline{y}$ for the output of the FFNN. The IBP inclusion function $\OF^{\FN} =
      \left[\begin{smallmatrix}\underline{\OF}^{\FN}\\ \overline{\OF}^{\FN}\end{smallmatrix}\right]:\mathcal{T}^r\to \real^{2q}$ for the input-output map $f$ is then defined by:
      \begin{align}\label{eq:inclusion:FFNN}
       \underline{\OF}^{\FN}(\underline{x},\overline{x}) = \underline{y},\qquad  \overline{\OF}^{\FN}(\underline{x},\overline{x}) = \overline{y},   
      \end{align}
      In the next two subsections, we use the two perspective \emph{generalized architecture} and \emph{alternative deep modeling} toward INNs, to establish connections between the IBP approach in~\cite{SG-etal:18} and our mixed monotone contracting approach.

  \subsection{Generalized architecture}
  By considering finite-depth FFNNs as a special case of implicit neural networks, one can show that our mixed monotone contracting approach is a generalization of the IBP approach in~\cite{SG-etal:18} to INNs.

  \begin{theorem}[Embedded feedforward neural networks]\label{thm:L4DC-FFNN}
Consider the FFNN~\eqref{eq:FFNN} with the associated implicit neural network~\eqref{eq:associated-INN}. The following statements hold: 
\begin{enumerate}
\item \label{p3:L4DC-FFNN} for every $i\!\in \!\{1,\ldots,k\}$, the function $(\underline{z},\overline{z})\!\mapsto\!\!
\left[\begin{smallmatrix}\FN_i^{\mathrm{E}}(\underline{z},\overline{z}) \\ \FN_i^{\mathrm{E}}(\overline{z},\underline{z})\end{smallmatrix}\right]$ is a tight inclusion function for the $i$th layer evaluation map $\FN_i(z) := \sigma(W_i z+ b_i)$;
\item\label{p0:L4DC-FFNN} there exists $\eta\in \real_{>0}^n$ such that $\mu_{\infty,[\eta]^{-1}}(W^{\FN}) < 1$.
    \end{enumerate}
If $\OF^{\IFN}$ is the inclusion function obtained from~\eqref{eq:associated-INN} using mixed monotone contracting approach and $\OF^{\FN}$ is the inclusion function~\eqref{eq:inclusion:FFNN} obtained using IBP approach, then 
\begin{enumerate}\setcounter{enumi}{2}
          \item\label{p4:L4DC-FFNN} $\OF^{\IFN}(\underline{x},\overline{x}) = \OF^\FN (\underline{x},\overline{x})$, for every $\underline{x}\le \overline{x}\in \real^r$.
          \end{enumerate}
  \end{theorem}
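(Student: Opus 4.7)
The plan is to treat the three parts in order, then combine them.

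For part~\ref{p3:L4DC-FFNN}, the proof is a direct application of the min/max characterization~\eqref{eq:tight-inclusion} of the tight inclusion function, exploiting the monotonicity assumption $0\le (\sigma_i(p)-\sigma_i(q))/(p-q)\le 1$. Componentwise, since each $\sigma_j$ is non-decreasing and the pre-activation $(W_iz+b_i)_j$ is linear in $z$, the maximum (resp.\ minimum) of $\sigma_j((W_iz+b_i)_j)$ over the box $[\underline z,\overline z]$ is attained by choosing, coordinate by coordinate, the upper (resp.\ lower) endpoint whenever the associated entry of $W_i$'s $j$-th row is non-negative and the opposite endpoint otherwise. This yields exactly $\FN_i^{\mathrm{E}}(\overline z,\underline z)_j$ and $\FN_i^{\mathrm{E}}(\underline z,\overline z)_j$, proving tightness. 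Properties~(i)–(ii) of Definition~\ref{def:inclusion} are then trivial to verify.

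For part~\ref{p0:L4DC-FFNN}, I would use the observation that, from the explicit form~\eqref{eq:def-WFN}, the matrix $W^{\FN}$ is strictly block upper triangular with zero diagonal blocks; in particular $(W^{\FN})_{ii}=0$ for every $i$. Choose $\eta\in \real^{n}_{>0}$ with geometrically decreasing weights across block rows, e.g., $\eta_i=\delta^{k-b(i)}$ where $b(i)$ is the block index of coordinate~$i$ and $\delta>0$ is small. Since the only nonzero entries of $W^{\FN}$ lie in blocks $(b,b+1)$, each ratio $\eta_j/\eta_i$ appearing in $\mu_{\infty,[\eta]^{-1}}(W^{\FN})$ equals $\delta$, and therefore $\mu_{\infty,[\eta]^{-1}}(W^{\FN})\le \delta\,\max_{i,b}\sum_{j}|(W_b)_{ij}|$, which is less than $1$ for $\delta$ sufficiently small.

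Part~\ref{p4:L4DC-FFNN} is the main content. By part~\ref{p0:L4DC-FFNN}, Theorem~\ref{thm:L4DC} applies to the associated INN~\eqref{eq:associated-INN}, so the mixed monotone contracting inclusion function is completely determined by the unique fixed point $(\underline z^*,\overline z^*)$ of the embedded INN~\eqref{eq:INN-embedding}. The crucial simplification is that, because $W^{\FN}$ has zero diagonal, $\lceil W^{\FN}\rceil^{\mathrm{Mzl}}=[W^{\FN}]^{+}$ and $\lfloor W^{\FN}\rfloor^{\mathrm{Mzl}}=[W^{\FN}]^{-}$, so the embedded system is literally an IBP-style recursion in disguise. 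I would then exploit the strictly block upper triangular structure: the block of the fixed point associated with the first FFNN layer involves only the input, giving $\underline z^*_{(1)}=\sigma([W_1]^{+}\underline x+[W_1]^{-}\overline x+b_1)=\underline z^1$ and $\overline z^*_{(1)}=\overline z^1$, matching the IBP bounds exactly. Proceeding by induction on the layer index, the block associated with the $i$-th layer satisfies $\underline z^*_{(i)}=\sigma([W_i]^{+}\underline z^*_{(i-1)}+[W_i]^{-}\overline z^*_{(i-1)}+b_i)$, which is precisely the IBP recursion. Finally, $C^{\FN}$ picks out only the block corresponding to the last hidden layer, so the output step $\underline y=[C^{\FN}]^{+}\underline z^*+[C^{\FN}]^{-}\overline z^*+c$ reduces to the IBP output bound $\underline y=[C]^{+}\underline z^k+[C]^{-}\overline z^k+c$, and symmetrically for $\overline y$. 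This gives $\OF^{\IFN}=\OF^{\FN}$.

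The main obstacle is bookkeeping: the block ordering in $z=[z_k,\ldots,z_1]^{\top}$ must be tracked carefully so that the cascaded fixed-point equations line up with the IBP recursion in the correct direction, and the uniqueness provided by Theorem~\ref{thm:L4DC} must be invoked to conclude that the explicitly constructed cascade bounds are indeed $(\underline z^*,\overline z^*)$. The analytic content is otherwise light once the structural observation that $W^{\FN}$ has zero diagonal (so the Metzler split collapses to the sign split) is made.
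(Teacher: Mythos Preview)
Your proposal is correct and follows essentially the same route as the paper's proof: part~\ref{p3:L4DC-FFNN} via the min/max characterization~\eqref{eq:tight-inclusion} and monotonicity of $\sigma$, part~\ref{p0:L4DC-FFNN} via geometrically scaled block weights exploiting the strictly block upper triangular form of $W^{\FN}$, and part~\ref{p4:L4DC-FFNN} by solving the embedded fixed-point equation block-by-block (back-substitution) and identifying the result with the IBP recursion. Your explicit remark that the zero diagonal of $W^{\FN}$ forces $\lceil W^{\FN}\rceil^{\mathrm{Mzl}}=[W^{\FN}]^{+}$ and $\lfloor W^{\FN}\rfloor^{\mathrm{Mzl}}=[W^{\FN}]^{-}$ is a point the paper uses implicitly but does not state; making it explicit is helpful and does not constitute a different argument.
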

  \begin{proof}
  Regarding part~\ref{p3:L4DC-FFNN}, we solve the optimization problems~\eqref{eq:tight-inclusion} directly to obtain the tight inclusion function for $\FN_i$. Suppose that $W_i\in \real^{n_{i-1} \times n_i}$. For every $j\in \{1,\ldots,n_i\}$, we have
  \begin{align*}
  \min_{z\in [\underline{z},\overline{z}]} & [\sigma(W_iz+b)]_j = \sigma_j(\min_{z\in [\underline{z},\overline{z}]}(W_iz+b)_j) \\ & = [\sigma([W_i]^+\underline{z} + [W_i]^-\overline{z}+ b)]_j = [\FN^{\mathrm{E}}_i(\underline{z},\overline{z})]_j,
  \end{align*}
  where the first equality holds because $\sigma_j$ is weakly-increasing and the second equality holds by choosing the optimal solution $z^* = [z^*_1,\ldots,z^*_{n_i}]^{\top}\in [\underline{z},\overline{z}]$ defined by
  \begin{align*}
      z^*_l = \begin{cases}
      \overline{z}_l & [W_i]^+_{jl} = 0,\\
      \underline{z}_l & \mbox{otherwise},
      \end{cases}
  \end{align*}
  for every $l\in \{1,\ldots,n_i\}$. Similarly, one can show that $\max_{z\in [\underline{z},\overline{z}]} [\sigma(W_i z+b)]_j =[\FN^{\mathrm{E}}_i(\overline{z},\underline{z})]_j$, for every $j\in \{1,\ldots,n_i\}$. This implies that $(\underline{z},\overline{z})\mapsto \left[\begin{smallmatrix} \FN^{\mathrm{E}}_i(\underline{z},\overline{z})\\ \FN^{\mathrm{E}}_i(\overline{z},\underline{z}) \end{smallmatrix}\right]$ is a tight inclusion function for $z \mapsto \FN_i(z)$. 
  
  Regarding part~\ref{p0:L4DC-FFNN}, we choose $\eta = [\delta, \dots, \delta^{k}]\in \real^n_{>0}$ for $\delta > 0$. Then, using the definition of $W^{\FN}$ in~\eqref{eq:def-WFN}, we get 
  \begin{align*}
      \mu_{\infty,[\eta]^{-1}}(W^{\FN}) & = \mu_{\infty,[\eta]^{-1}}\left(\begin{bmatrix}
     \vect{0} & W_k  & \cdots & \vect{0} \\ 
     \vdots  & \vdots &\ddots & \vdots\\
     \vect{0} & \vect{0} & \cdots & W_1 \\
     \vect{0} & \vect{0} & \cdots & \vect{0}
    \end{bmatrix}\right) \\ & = \mu_{\infty}\left(\begin{bmatrix}
     \vect{0} & \delta^{-1} W_k  & \cdots & \vect{0} \\ 
     \vdots  & \vdots &\ddots & \vdots\\
     \vect{0} & \vect{0} & \cdots & \delta^{-1} W_1 \\
     \vect{0} & \vect{0} & \cdots & \vect{0}
    \end{bmatrix}\right) \\ & = \delta^{-1} \max_{i\in\{1,\ldots,k\}} \{\|W_i\|_{\infty}\}. 
  \end{align*}
  By choosing $\delta>0$ such that $\max_{i\in\{1,\ldots,k\}}\{\|W_i\|_{\infty}\} < \delta$, we get $\mu_{\infty,[\eta]^{-1}}(W^{\FN}) < 1$. 
  
  Regarding part~\ref{p4:L4DC-FFNN}, by part~\ref{p0:L4DC-FFNN} and Theorem~\ref{thm:L4DC}\ref{p3:L4DC}, for the implicit neural network~\eqref{eq:associated-INN}, the inclusion function $\OF^{\IFN} = \left[\begin{smallmatrix}
            \underline{\OF}^{\IFN}\\
             \overline{\OF}^{\IFN}
          \end{smallmatrix}\right]$ is well-defined and is given by
  \begin{align*}
                  \underline{\OF}^{\IFN}(\underline{x},\overline{x}) &= [C]^{+}\underline{z}^*+[C]^{-}\overline{z}^*+c,\nonumber\\
               \overline{\OF}^{\IFN}(\underline{x},\overline{x}) &= [C]^{+}\overline{z}^*+[C]^{-}\underline{z}^*+c,
  \end{align*}
  where $\left[ \begin{smallmatrix}\underline{z}^* \\ \overline{z}^* \end{smallmatrix}\right] = \left[ \begin{smallmatrix}(\underline{z}^*_1,\ldots, \underline{z}^*_k)^{\top}\\  (\overline{z}^*_1,\ldots, \overline{z}^*_k)^{\top}\end{smallmatrix} \right]$ is the unique solution of the following fixed-point equation 
  \begin{align}\label{eq:fixed-point-IFN}
      \begin{bmatrix}
       \underline{z}\\
       \overline{z}
      \end{bmatrix} = \begin{bmatrix}
       \IFN^{\mathrm{E}}(\underline{z},\overline{z},\underline{x},\overline{x})\\
       \IFN^{\mathrm{E}}(\overline{z},\underline{z},\overline{x},\underline{x})
      \end{bmatrix}.
  \end{align}
  Theorem~\ref{thm:L4DC}\ref{p1:L4DC} suggests to solve the fixed-point equation~\eqref{eq:fixed-point-IFN} using the $\alpha$-average iterations    $
          \left[\begin{smallmatrix}\underline{z}^{l+1} \\ \overline{z}^{l+1}\end{smallmatrix}\right] = 
          \left[\begin{smallmatrix}\IFN^{\mathrm{E}}_{\alpha}(\underline{z}^l,\overline{z}^l,\underline{x},\overline{x})\\
          \IFN^{\mathrm{E}}_{\alpha}(\overline{z}^l,\underline{z}^l,\overline{x},\underline{x})\end{smallmatrix}\right]$. Since $W^{\FN}$ has a block upper diagonal structure, one can alternatively solve the fixed-point equation~\eqref{eq:fixed-point-IFN} via back-substitution. Using the convention $\underline{z}^*_0=\underline{x}$ and $\overline{z}^*_0=\overline{x}$, we can use back-substitution to obtain
  \begin{align*}
          \underline{z}^*_{i+1}&=\sigma( [W_i]^+ \underline{z}^*_i+[W_i]^- \overline{z}^*_i + b_i)=\FN_i^{\mathrm{E}}(\underline{z}^*_i,\overline{z}^*_i),\\
     \overline{z}^*_{i+1} &= \sigma( [W_i]^+ \overline{z}^*_i+[W_i]^- \underline{z}^*_i + b_i)=\FN_i^{\mathrm{E}}(\overline{z}^*_i,\underline{z}^*_i), 
  \end{align*}
  for every $i\in \{0,1,\ldots,k-1\}$. As a result,
  \begin{align*}
      \underline{\OF}^{\IFN}(\underline{x},\overline{x}) = [C]^+\underline{z}^*_{k} + [C]^-\overline{z}^*_{k} + c =  \underline{\OF}^{\FN}(\underline{x},\overline{x}), \\
      \overline{\OF}^{\IFN}(\underline{x},\overline{x}) = [C]^+\overline{z}^*_{k} + [C]^-\underline{z}^*_{k} + c =  \overline{\OF}^{\FN}(\underline{x},\overline{x}).
  \end{align*}
  This completes the proof of part~\ref{p4:L4DC-FFNN}. 
  \end{proof}



\subsection{Alternative deep modeling}

By separating the notion of depth from the layer-wise evaluation, our mixed monotone contracting approach can be used to estimate the reachable sets of deep weight-tied FFNNs. For the weight-tied FFNN~\eqref{eq:weight-tied}, we define
 \begin{multline*}
         \WFN^{\mathrm{E}}(\underline{z},\overline{z},\underline{x},\overline{x})=\sigma([W]^+ \underline{z}+[W]^-\overline{z} + \\ 
          [U]^{+}\underline{x} + [U]^{-}\overline{x} + b).
      \end{multline*}
      By replacing $\FN_i^{\mathrm{E}}$ with $\WFN^{\mathrm{E}}$ in~\eqref{eq:inclusion:FFNN}, we the IBP inclusion function $\OF^{\WFN}:\mathcal{T}^r\to \real^{2q}$.

\begin{theorem}[Weight-tied infinite-layer neural networks]\label{thm:infinite-layer}
Suppose that $\rho(|W|)<1$ and let $\eta\in \real^n_{>0}$ be the right Perron eignvector of $|W|$. For every $\underline{x}\le x \le \overline{x} \in \real^r$, we recursively define the sequences $\{z^i\}_{i=1}^{\infty}$ and $\{(\underline{z}^i,\overline{z}^i)\}_{i=1}^{\infty}$ starting from $\underline{z}^0\le z^0\le \overline{z}^0$ as follows:
\begin{align*}
z^{i+1} & = \WFN(z^i,x),\\ 
    \begin{bmatrix}\underline{z}^{i+1}\\ \overline{z}^{i+1}\end{bmatrix} &= \begin{bmatrix}\WFN^{\mathrm{E}}(\underline{z}^i,\overline{z}^i,\underline{x},\overline{x})\\ \WFN^{\mathrm{E}}(\overline{z}^i,\underline{z}^i,\overline{x},\underline{x})\end{bmatrix}.
\end{align*}
Then, the following statements hold: 
\begin{enumerate}
    \item\label{p1:infinite-layer} $\lim_{i\to \infty} \begin{bmatrix}\underline{z}^i\\ \overline{z}^i\end{bmatrix} =\begin{bmatrix}\underline{w}^*\\ \overline{w}^*\end{bmatrix}$ for some $\underline{w}^*\le \overline{w}^*\in \real^n$; 
    \item\label{p2:infinite-layer} $\lim_{i\to\infty} z^i = w^*$ for some $w^*\in [\underline{w}^*,\overline{w}^*]$; 
\end{enumerate}
Moreover, for the implicit neural network~\eqref{eq:INN},
\begin{enumerate}\setcounter{enumi}{2}
    \item\label{p4:infinite-layer} $\mu_{\infty,[\eta]^{-1}}(W)<1$;
    \end{enumerate}
If $\OF^{\WFN}$ is the inclusion function by the IBP approach as $k\to \infty$ and $\OF^{\ON}$ is the inclusion function from~\eqref{eq:INN}, then 
    \begin{enumerate}\setcounter{enumi}{3}
    \item\label{p5:infinite-layer} $\underline{\OF}^{\ON}(\underline{x},\overline{x}) \ge  \underline{\OF}^{\WFN}(\underline{x},\overline{x})$ and $ \overline{\OF}^{\ON}(\underline{x},\overline{x}) \le  \overline{\OF}^{\WFN}(\underline{x},\overline{x})$.    
\end{enumerate}
\end{theorem}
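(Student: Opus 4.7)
The plan is to tackle the four parts sequentially, recycling the contraction and mixed-monotonicity machinery developed for Theorem~\ref{thm:L4DC} and then isolating a simple componentwise comparison between the IBP and Metzler embeddings. For parts~\ref{p1:infinite-layer} and~\ref{p2:infinite-layer}, I would first verify contraction of the IBP embedding in $\|\cdot\|_{\infty,I_2\otimes[\eta]^{-1}}$. The linear part of the embedded iteration is the block matrix $\bigl[\begin{smallmatrix}[W]^+ & [W]^- \\ [W]^- & [W]^+\end{smallmatrix}\bigr]$, whose weighted $\ell_\infty$-operator norm equals $\|\,|W|\,\|_{\infty,[\eta]^{-1}}=\rho(|W|)$ precisely because $\eta$ is the Perron eigenvector; combined with the $1$-Lipschitz activation $\sigma$, this yields a Banach contraction, so the IBP iterates converge to a unique fixed point $(\underline{w}^*,\overline{w}^*)$. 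The same argument applied to $z^{i+1}=\WFN(z^i,x)$ gives $z^i\to w^*$. To obtain $w^*\in[\underline{w}^*,\overline{w}^*]$, I would run the induction $\underline{z}^i\le z^i\le\overline{z}^i$: writing $\WFN^{\mathrm{E}}(\underline{z}^i,\overline{z}^i,\underline{x},\overline{x})-\WFN(z^i,x) = \Theta\bigl([W]^+(\underline{z}^i-z^i) + [W]^-(\overline{z}^i-z^i) + [U]^+(\underline{x}-x) + [U]^-(\overline{x}-x)\bigr)$ via the mean value theorem (with $\Theta$ a diagonal slope matrix in $[0,1]^n$), each summand is entrywise non-positive because of the signs of $[W]^\pm$ and $[U]^\pm$, and passing to the limit yields the claim.

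Part~\ref{p4:infinite-layer} is a direct consequence of the Perron identity. Since $|W|\eta=\rho(|W|)\eta$, we have $\sum_{j\neq i}\frac{\eta_j}{\eta_i}|W_{ij}| = \rho(|W|) - |W_{ii}|$, and substituting into the definition of the weighted matrix measure gives
$\mu_{\infty,[\eta]^{-1}}(W) = \max_i\bigl(W_{ii}-|W_{ii}|\bigr) + \rho(|W|) \le \rho(|W|) < 1$,
since $W_{ii}-|W_{ii}|\le 0$. This makes Theorem~\ref{thm:L4DC} directly applicable to the INN~\eqref{eq:INN}, producing the Metzler fixed point $(\underline{z}^*,\overline{z}^*)$ that defines $\OF^{\ON}$ via~\eqref{eq:inclusion-INN}.

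The heart of the theorem is part~\ref{p5:infinite-layer}, which I would reduce to the componentwise inequalities $\underline{z}^* \ge \underline{w}^*$ and $\overline{z}^* \le \overline{w}^*$; the output claim then follows by applying $[C]^+\ge 0$ and $[C]^-\le 0$ to~\eqref{eq:inclusion-INN}. The key lemma is the pointwise comparison $\ON^{\mathrm{E}}(\underline{z},\overline{z},\underline{x},\overline{x}) \ge \WFN^{\mathrm{E}}(\underline{z},\overline{z},\underline{x},\overline{x})$ for all $\underline{z}\le\overline{z}$ (and its swapped counterpart). The two embeddings agree off the diagonal; on the diagonal the Metzler decomposition keeps all of $W_{ii}$ in $\lceil W\rceil^{\mathrm{Mzl}}$ whereas IBP splits it into $[W_{ii}]^\pm$, so the pre-activation difference equals $[W_{ii}]^-(\underline{z}_i-\overline{z}_i)\ge 0$, and monotonicity of $\sigma$ preserves the sign. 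To propagate this inequality to the fixed points, I would initialize the $\alpha$-averaged Metzler iteration of Theorem~\ref{thm:L4DC}\ref{p1:L4DC} at $(\underline{w}^*,\overline{w}^*)$ with $\alpha\in(0,\alpha^*]$: the first step already gives $\underline{z}^1 - \underline{w}^* = \alpha\bigl(\ON^{\mathrm{E}}(\underline{w}^*,\overline{w}^*,\underline{x},\overline{x})-\WFN^{\mathrm{E}}(\underline{w}^*,\overline{w}^*,\underline{x},\overline{x})\bigr)\ge 0$, and I would iterate by induction, using exactly the signed-matrix decomposition $(1-\alpha)I_n + \alpha\Theta\lceil W\rceil^{\mathrm{Mzl}}\ge 0$ and $\alpha\Theta\lfloor W\rfloor^{\mathrm{Mzl}}\le 0$ that appears in the closing argument of the proof of Theorem~\ref{thm:L4DC}. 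Taking $k\to\infty$ and identifying the limit with $(\underline{z}^*,\overline{z}^*)$ via Theorem~\ref{thm:L4DC}\ref{p1:L4DC} completes the proof.

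The main obstacle I anticipate is the bookkeeping around $\alpha$-averaging in part~\ref{p5:infinite-layer}: the non-averaged Metzler iteration is in general not mixed-monotone, since $W_{ii}<0$ spoils positivity of the Jacobian on the diagonal. The $\alpha\le\alpha^*$ restriction is required simultaneously for the single-step comparison to be preserved by iteration and for the INN iteration itself to be contracting; weaving these two roles together cleanly, while keeping the comparison $\ON^{\mathrm{E}}\ge\WFN^{\mathrm{E}}$ entirely pointwise (independent of $\alpha$), is the subtlest part of the argument.
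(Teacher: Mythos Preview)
Your proposal is correct and, for parts~\ref{p1:infinite-layer}--\ref{p4:infinite-layer}, follows the paper's argument essentially verbatim: Banach contraction of the IBP embedding in $\|\cdot\|_{\infty,I_2\otimes[\eta]^{-1}}$ via $\left\|\left[\begin{smallmatrix}[W]^+ & [W]^-\\ [W]^- & [W]^+\end{smallmatrix}\right]\right\|_{\infty,I_2\otimes[\eta]^{-1}}=\rho(|W|)$, the sandwich induction $\underline z^i\le z^i\le\overline z^i$, and the Perron identity (the paper uses the one-line bound $\mu_{\infty,[\eta]^{-1}}(W)\le\|W\|_{\infty,[\eta]^{-1}}=\rho(|W|)$ rather than your explicit expansion, but the content is the same).

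For part~\ref{p5:infinite-layer} both you and the paper rest on the same pointwise lemma $\ON^{\mathrm{E}}(\underline z,\overline z,\cdot)\ge\WFN^{\mathrm{E}}(\underline z,\overline z,\cdot)$ on $\{\underline z\le\overline z\}$, obtained from $[W]^+\ge\lceil W\rceil^{\mathrm{Mzl}}$ and $[W]^-\le\lfloor W\rfloor^{\mathrm{Mzl}}$; the difference is in how the comparison is transferred to the fixed points. The paper invokes $\alpha^*=1$ (citing the zero diagonal of $W^{\FN}$, which appears to be a slip of notation since the statement concerns a general $W$) and then writes $\lim_{i\to\infty}\ON^{\mathrm{E}}(\underline z^i,\overline z^i,\cdot)=\underline z^*$ directly along the IBP iterates---a step that is transparent only when $\operatorname{diag}(W)\ge 0$. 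Your route, initializing the $\alpha$-averaged Metzler iteration at the IBP fixed point $(\underline w^*,\overline w^*)$ and running the signed-matrix induction $(1-\alpha)I_n+\alpha\Theta\lceil W\rceil^{\mathrm{Mzl}}\ge 0$, $\alpha\Theta\lfloor W\rfloor^{\mathrm{Mzl}}\le 0$ from the proof of Theorem~\ref{thm:L4DC}, handles arbitrary diagonal signs and makes explicit precisely the subtlety you flag at the end. The cost is a little extra bookkeeping; the gain is an argument that is airtight without any implicit assumption on the sign of $W_{ii}$.
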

\begin{proof}
Regarding part~\ref{p1:infinite-layer}, for every $i\in \mathbb{Z}_{> 0}$, 
\begin{align*}
    &\left\|\begin{bmatrix}\underline{z}^{i+1}\\ \overline{z}^{i+1}\end{bmatrix}-\begin{bmatrix}\underline{z}^{i}\\ \overline{z}^{i}\end{bmatrix}\right\|_{\infty,I_2\otimes [\eta]^{-1}} \\ & = \left\|\begin{bmatrix}\WFN^{\mathrm{E}}(\underline{z}^i,\overline{z}^i,\underline{x},\overline{x}) - \WFN^{\mathrm{E}}(\underline{z}^{i-1},\overline{z}^{i-1},\underline{x},\overline{x})\\ \WFN^{\mathrm{E}}(\overline{z}^i,\underline{z}^i,\overline{x},\underline{x}) - \WFN^{\mathrm{E}}(\overline{z}^{i-1},\underline{z}^{i-1},\underline{x},\overline{x})\end{bmatrix}\right\|_{\infty,I_2\otimes [\eta]^{-1}} \\ & \le \left\|\begin{bmatrix}[W]^+ & [W]^-\\ [W]^- & [W]^+\end{bmatrix}\right\|_{\infty,I_2\otimes [\eta]^{-1}} \left\|\begin{bmatrix}\underline{z}^{i}\\ \overline{z}^{i}\end{bmatrix}-\begin{bmatrix}\underline{z}^{i-1}\\ \overline{z}^{i-1}\end{bmatrix}\right\|_{\infty,I_2\otimes [\eta]^{-1}}
\end{align*}
where the first equality holds by the definition of $\WFN^{\mathrm{E}}$ and the second inequality holds by the fact that $\sigma$ is weakly-increasing. On the other hand, we have
\begin{align*}
    \left\|\begin{bmatrix}[W]^+ & [W]^-\\ [W]^- & [W]^+\end{bmatrix}\right\|_{\infty,I_2\otimes [\eta]^{-1}} &= \|W\|_{\infty,[\eta]^{-1}} \\ & = \rho(|W|) < 1,
\end{align*}
where the first equality holds by the definition of the $[\eta]$-weighted $\infty$-norm and the second inequality holds using the fact that $\eta\in \real^n_{>0}$ is the right Perron eigenvector of $|W|$. The convergence of the sequence $\{(\underline{z}^i,\overline{z}^i)\}_{i=1}^{\infty}$ to some vector $\left[\begin{smallmatrix}\underline{w}^*\\ \overline{w}^*\end{smallmatrix}\right]$ follows from the Banach Contraction Mapping Theorem.

Regarding part~\ref{p2:infinite-layer}, note that by choosing $\underline{x}=\overline{x}=x$ and $\underline{z}^0=z^0=\overline{z}^0$, we get $\underline{z}^i=z^i=\overline{z}^i$, for every $i\in \mathbb{Z}_{>0}$. Therefore, the convergence of the sequence $\{z^i\}_{i=1}^{\infty}$ to some vector $w^*$ follows from part~\ref{p1:infinite-layer}. It remains to show that $\underline{w}^*\le w^*\le \overline{w}^*$. We prove this inequality using induction. First note that by the choice of the initial conditions, we have $\underline{z}^0\le z^0\le \overline{z}^0$. Now we assume that $\underline{z}^i\le z^i\le \overline{z}^i$, for every $i\in \{0,\ldots,m\}$. In turn, we have
\begin{align*}
   \underline{z}^{m+1} & = \sigma([W]^+\underline{z}^{m}+[W]^-\overline{z}^{m}+[U]^+\underline{x}+[U]^-\overline{x}) \\ & \le z^{m+1} = \sigma([W]^+ z^{m}+[W]^- z^{m}+[U]^+\underline{x}+[U]^-\overline{x})  \\ & \le \overline{z}^{m+1} = \sigma([W]^+\overline{z}^{m}+[W]^-\underline{z}^{m}+[U]^+\overline{x}+[U]^-\underline{x}). 
\end{align*}
This complete the proof of induction. Therefore, for every $i\in \mathbb{Z}_{\ge 0}$, we have $\underline{z}^i\le z^i\le \overline{z}^i$. By taking the limit as $i\to \infty$,
\begin{align*}
    \underline{w}^* = \lim_{i\to\infty} \underline{z}^i \le \lim_{i\to\infty} z^i = w^* \le \lim_{i\to\infty} \overline{z}^i = \overline{w}^*. 
\end{align*}

Regarding part~\ref{p4:infinite-layer}, the result holds due to the inequality 
\begin{align*}
    \mu_{\infty,[\eta]^{-1}}(W) \le \|W\|_{\infty,[\eta]^{-1}} = \rho(|W|)<1.  
\end{align*}

Regrading part~\ref{p5:infinite-layer}, first note that by part~\ref{p4:infinite-layer} and Theorem~\ref{thm:L4DC}\ref{p3:L4DC}, the inclusion function $\OF^{\ON} = \left[\begin{smallmatrix}\underline{\OF}^{\ON} \\ \overline{\OF}^{\ON}\end{smallmatrix}\right]$ is well-defined. Moreover, by part~\ref{p1:infinite-layer}, the inclusion function $\OF^{\WFN} = \left[\begin{smallmatrix}\underline{\OF}^{\WFN} \\ \overline{\OF}^{\WFN}\end{smallmatrix}\right]$ is well-defined in the limit as $k\to\infty$. Note that $[W]^+\ge \lceil W \rceil^{\mathrm{Mzl}}$ and $[W]^- \le \lfloor W \rfloor^{\mathrm{Mzl}}$. As a result, starting from $\underline{z}^0\le z^0\le \overline{z}^0$, for every $i\in \mathbb{Z}_{\ge 0}$,
\begin{align*}
    \ON^{\mathrm{E}}(\underline{z}^i,\overline{z}^i,\underline{x},\overline{x}) &\ge \WFN^{\mathrm{E}}(\underline{z}^i,\overline{z}^i,\underline{x},\overline{x}), \\
    \ON^{\mathrm{E}}(\overline{z}^i,\underline{z}^i,\overline{x},\underline{x}) &\le \WFN^{\mathrm{E}}(\overline{z}^i,\underline{z}^i,\overline{x},\underline{x}),
\end{align*}
It is easy to check that $[1-\min_{i\in \{1,\ldots,n\}}[W^{\FN}_{ii}]^{-}]^{-1} = 1$. Therefore, using Theorem~\ref{thm:L4DC}\ref{p1:L4DC}, we get
\begin{align*}
    \lim_{i\to\infty} \begin{bmatrix}
     \ON^{\mathrm{E}}(\underline{z}^i,\overline{z}^i,\underline{x},\overline{x}) \\
     \ON^{\mathrm{E}}(\overline{z}^i,\underline{z}^i,\overline{x},\underline{x})
    \end{bmatrix} = \begin{bmatrix}
     \underline{z}^*\\
     \overline{z}^*
    \end{bmatrix}.
\end{align*}
Then one can use Theorem~\ref{thm:L4DC}\ref{p3:L4DC} to obtain
\begin{align*}
     \underline{\OF}^{\ON}(\underline{x},\overline{x}) &= [C]^+ \underline{z}^* +[C]^-\overline{z}^*+c \\ &=  [C]^+ \lim_{i\to\infty}\ON^{\mathrm{E}}(\underline{z}^i,\overline{z}^i,\underline{x},\overline{x}) \\ & \qquad\qquad\qquad + [C]^- 
    \lim_{i\to\infty}\ON^{\mathrm{E}}(\overline{z}^i,\underline{z}^i,\overline{x},\underline{x}) + c \\ & \ge [C]^+ \lim_{i\to\infty}\WFN^{\mathrm{E}}(\underline{z}^i,\overline{z}^i,\underline{x},\overline{x}) \\ & \qquad\qquad\qquad + [C]^- 
    \lim_{i\to\infty}\WFN^{\mathrm{E}}(\overline{z}^i,\underline{z}^i,\overline{x},\underline{x}) + c \\ &  = \underline{\OF}^{\WFN}(\underline{x},\overline{x}),
\end{align*}
Similarly, one can show that $\overline{\OF}^{\ON}(\underline{x},\overline{x}) \le \overline{\OF}^{\WFN}(\underline{x},\overline{x})$ and this completes the proof of part~\ref{p5:infinite-layer}. 
\end{proof}

\begin{remark}[Comparison with IBP]
 For weight-tied infinite-depth FFNNs, Theorem~\ref{thm:infinite-layer}\ref{p5:infinite-layer} shows that the  mixed monotone contracting approach provides sharper (or equal) estimates of the reachable set than the IBP approach. 
\end{remark}

\section{Training robust implicit neural networks}\label{sec:train}

In this section, we estimate certified adversarial robustness of INNs using the inclusion functions and propose suitable optimization problems for training certifiably robust INNs.


\paragraph*{Certified adversarial robustness for classification tasks}
We say an INN is certifiably robust for input $x$ if its prediction at $x$ is verifiably constant within a given $\ell_\infty$ ball around $x$. We refer to~\cite{SJ-MA-AD-FB-SC:21y} for a rigorous definition of certified adversarial robustness.
We use the embedded INN~\eqref{eq:INN-embedding} to obtain a sufficient condition for certified robustness. Given a robustness radius $\epsilon>0$, for every input $x\in \real^r$, we define 
$\underline{x}=x-\epsilon\vect{1}_r, 
\overline{x}=x+\epsilon\vect{1}_r.$
Following~\cite[Eq. 3]{HZ-etal:20} and~\cite{SJ-MA-AD-FB-SC:21y}, for each input $x' \in [\underline{x},\overline{x}]$, we define the \emph{relative classifier variable}, $m^x(x')\in \real^q$ by
\begin{equation}\label{eq:relativeclassifier}
    m^x(x') = f(x')_i \vectorones[q] - f(x'),
\end{equation} 
where $i$ is the correct label of $x$. Note that $m^x(x')_j > 0$ for all $j \neq i$ if and only if $x'$ is labeled the same as $x$ by the neural network. Therefore, we write $m^x(x') = T^x f(x') = T^x Cz^*(x') + T^x c$, for suitable specification matrix $T^x \in \{-1,0,1\}^{q \times q}$ defined via the linear transformation~\eqref{eq:relativeclassifier}. Moreover, if there exists $\eta\in \real^n_{>0}$ so that $\mu_{\infty,[\eta]^{-1}}(W)<1$, then we can use Theorem~\ref{thm:L4DC} to define
\begin{align*}
    \underline{m}^x(\underline{x},\overline{x}) = [T^x C]^+ \underline{z}^*(\underline{x},\overline{x}) + [T^x C]^- \overline{z}^*(\underline{x},\overline{x}) + T^x c. 
\end{align*}
%
Moreover, $\min_{j \neq i} \{\underline{m}_j^x(\underline{x},\overline{x})\}>0$ is a sufficient condition for certified adversarial robustness of the INN~\cite{SJ-MA-AD-FB-SC:21y}.



\paragraph*{Training optimization problem}
We aim to design optimization problems to train a neural network which is robust to input perturbations with $\ell_{\infty}$-norm smaller than some $\epsilon$. Let $\mathcal{L}$ be the cross-entropy loss function and assume that $\{(\widehat{x}^l,\widehat{y}^l)\}_{l=1}^{N}$ is a set of $N$ labeled data points used for training. For every $l\in \{1,\ldots,N\}$, we define the following upper and the lower bounds on the input $\widehat{x}^l$ by 
    $\underline{x}^l=\widehat{x}^l-\epsilon \vect{1}_r$ and $\overline{x}^l=\widehat{x}^l+\epsilon \vect{1}_r$. 
We use the robust optimization framework~\cite{AM-AM-LS-DT-AV:17} for designing robust neural networks. Our objective is to minimize the robust loss function $\sum_{l=1}^{N} \max_{x\in [\underline{x}^l,\overline{x}^l]}\mathcal{L}(f(\widehat{x}^l),\widehat{y}^l)$ on the training data. However, using the robust loss for training of training neural networks leads to a min-max optimization problem that scales poorly with the size of the training data~\cite{EW-ZK:18}. Using~\cite[Theorem 2]{EW-ZK:18}, for the cross-entropy loss, and for $\underline{m}^l := \underline{m}^{\widehat{x}^l}(\underline{x}^l,\overline{x}^l)$ and every $l\in\{1,\ldots,N\}$,
\begin{align*}
    \mathcal{L}(f(\widehat{x}^l),\widehat{y}^l) \le \mathcal{L}(-\underline{m}^l,\widehat{y}^l),\quad \text{for all } x\in [\underline{x}^l,\overline{x}^l].
\end{align*}
Therefore, one can instead use the loss function $\sum_{l=1}^{N}\mathcal{L}(-\underline{m}^l,\widehat{y}^l)$ as a tractable upper bound on the robust loss in the training optimization problem. 

 \begin{figure*}[!ht]
 \begin{center}
\begin{tabular}{cc}
		\includegraphics[width = 0.44\linewidth,clip]{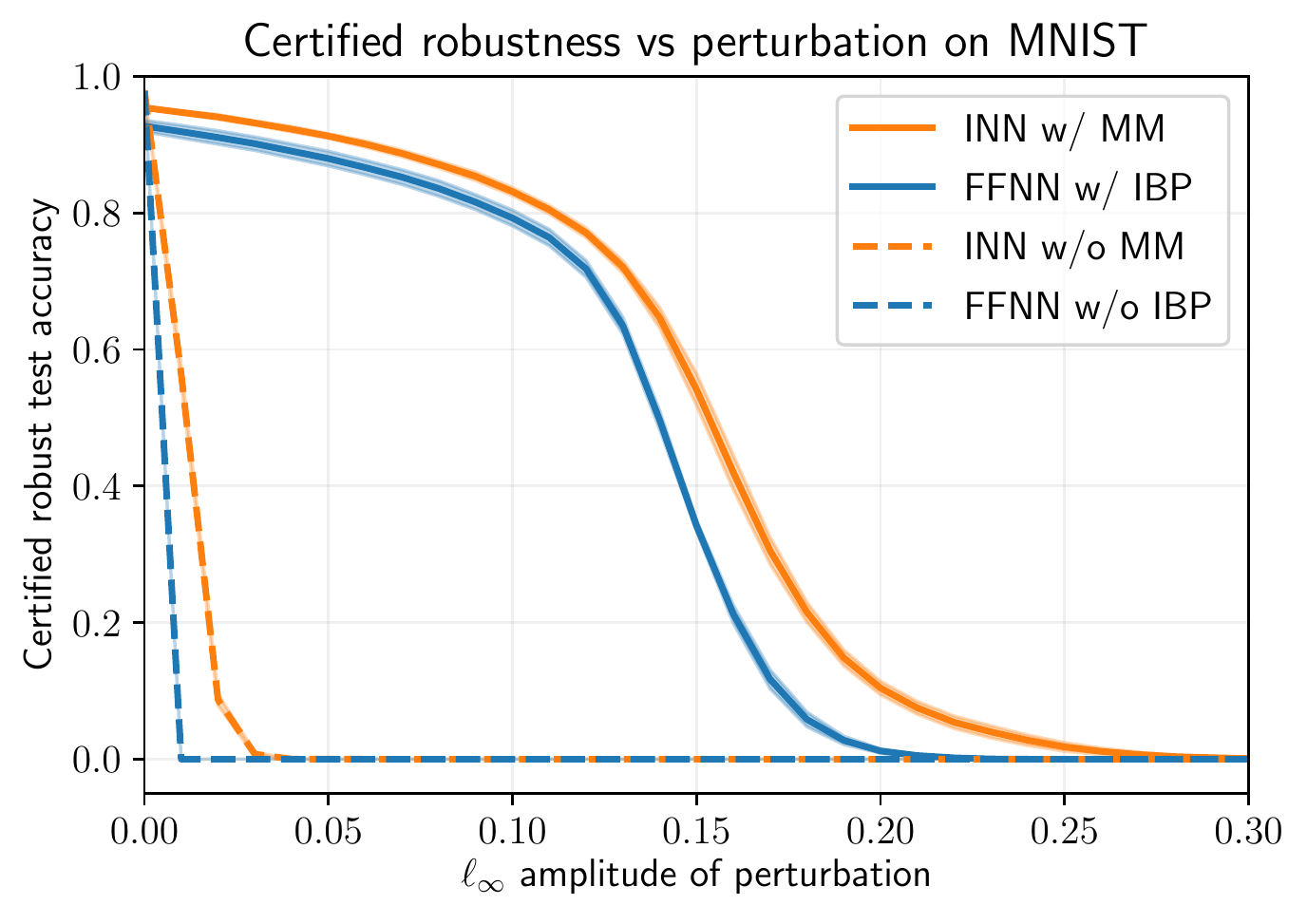}&
		\includegraphics[width = 0.44\linewidth,clip]{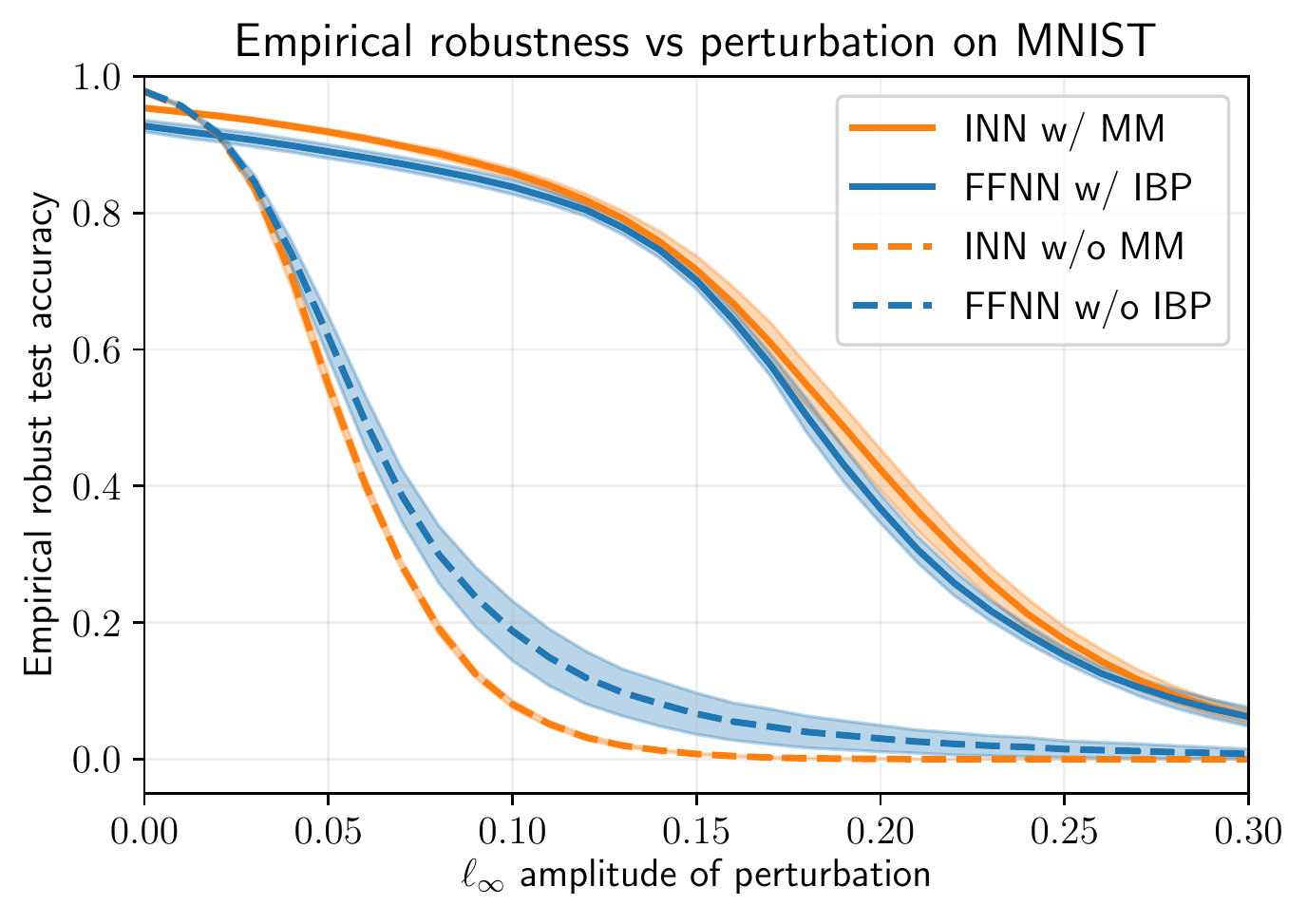}
	\end{tabular}
	\end{center}
	\vspace{-0.35em}
	\caption{Performance comparison on the MNIST test data between INNs trained with and without mixed monotonicity (MM) and $5$-layer FFNNs trained with and without IBP. The INNs have $89710$ trainable parameters and FFNNs have $93200$ trainable parameters. The left plot shows the certified robust accuracy of the models computed using either MM or IBP while the right plot shows the empirical robustness of the models against a PGD attack. In each plot, dark lines correspond to the mean accuracy across $5$ neural networks while light envelopes around the dark lines correspond to one standard deviation.}
	\label{fig:MNIST}
\end{figure*}

As pointed out in~\cite{SG-etal:18} for FFNNs, using the loss function $\sum_{l=1}^{N}\mathcal{L}(-\underline{m}^l,\widehat{y}^l)$ in the training can lead to convergence instability. To improve the stability of the training,  following~\cite{SG-etal:18}, we instead use a convex combination of the empirical risk loss and the robust loss. Therefore, for $T^l := T^{\widehat{x}^l}$ we get the following training problem:
\begin{align}\label{eq:training-robust-2}
     \min_{W,U,C,b,c,\eta}&\quad\quad \sum_{l=1}^{N}(1-\kappa) \mathcal{L}(y^l,\widehat{y}^l) + \kappa \mathcal{L}(-\underline{m}^{l},\widehat{y}^l),\nonumber\\
    \begin{bmatrix}
     \underline{z}^l\\
     \overline{z}^l
    \end{bmatrix} &= \begin{bmatrix}
     \ON^{\mathrm{E}}(\underline{z}^l,\overline{z}^l,\underline{x}^l, \overline{x}^l)\\
     \ON^{\mathrm{E}}(\overline{z}^l,\underline{z}^l,\overline{x}^l,\underline{x}^l)
    \end{bmatrix}, \\ 
     \underline{m}^{l} &=
     [T^{l}C]^+\underline{z}^l+[T^{l}C]^-\overline{z}^l + T^{l}c, \; \; z^l = \ON(z^l,\widehat{x}^l), \nonumber \\ y^l &= Cz^l + c, 
     \quad \mu_{\infty,[\eta]^{-1}}(W)\le \gamma. \nonumber
\end{align}
where $\kappa\in [0,1]$ and $\gamma\in (-\infty,1)$ are hyperparameters.

 \section{Numerical Experiments} 

 In this section we provide an experimental comparison between the robustness of FFNNs and INNs trained with and without IBP and mixed monotonicity, respectively\footnote{Code to reproduce the experiments is available at \url{https://github.com/davydovalexander/robust-inn-mm}.}.
 
 \paragraph*{Experimental setup} We consider the MNIST dataset, which contains $70000$ $28 \times 28$ pixel images of handwritten digits. For training, pixels are normalized into the range $[0,1]$. All INNs have $n = 100$ neurons with ReLU activation, while we consider five-layer FFNNs $(784\to 100\to 75\to 50\to 40\to 25\to 10)$ with ReLU activation. 
 
 Each model was trained for $40$ epochs using the Adam optimizer.
 INNs that were trained using mixed monotonicity and FFNNs that were trained using IBP have $\subscr{\epsilon}{test} = 0.1$ and $\subscr{\kappa}{nom} = 0.75$. From epochs $1$ to $10$, $\kappa$ and $\epsilon$ are set to $0$ so the models undergo regular (nonrobust) training. From epochs $11$ to $20$, $\epsilon$ and $\kappa$ are linearly increased such that at epoch $20$, $\epsilon = \subscr{\epsilon}{test}$ and $\kappa = \subscr{\kappa}{nom}$.
Regarding training INNs, we follow the non-Euclidean monotone operator framework described in~\cite{SJ-AD-AVP-FB:21f}. We impose $\mu_{\infty,[\eta]^{-1}}(W) \leq 0$ for some $\eta \in \realpositive^n$.


$10$ FFNNs and 10 INNs were trained; $5$ of each were trained using IBP or mixed monotonicity (for feedforwad and implicit, respectively) and $5$ of each were trained without any robust optimization (i.e., $\subscr{\epsilon}{test} = 0$). Figure~\ref{fig:MNIST} provides plots of certified adversarial robustness via the corresponding interval reachability technique and the empirically-observed robustness against a projected gradient descent (PGD) attack.

\paragraph*{Evaluation summary} Regarding certified robustness, at an $\ell_\infty$ perturbation radius of $0.1$, we observe that INNs trained using mixed monotonicity had, on average, an accuracy of $83.13\%$, while FFNNs trained using IBP had, on average an accuracy of $79.26\%$. We additionally observe that at the cost of a few percentage points in clean accuracy, both INNs and FFNNs trained robustly vastly outperform non-robustly trained models in both certified and empirical robustness. For example, at an $\ell_\infty$ perturbation radius of $0.1$, INNs trained without mixed monotonicity have an empirical accuracy of $8.04\%$, while INNs trained with mixed monotonicity have an accuracy of $85.84\%$, indicating an order of magnitude improvement in empirical robustness.

\section{Conclusion}
We develop a computationally efficient algorithm for training robust INNs. Moreover, we provide theoretical and empirical evidence in support of the following claims: (i) robustly-trained INNs are more robust than comparably-trained FFNNs, (ii) inclusion functions provide tighter estimates than Lipschitz constants, (iii) robustly-trained networks enjoy much stronger robustness properties than their non-robustly trained counterparts. 


\bibliography{alias,Main,FB,New,SJ}

\begin{thebibliography}{10}

\bibitem{BA-JZK:17}
B.~Amos and J.~Z. Kolter.
\newblock {OptNet}: {D}ifferentiable optimization as a layer in neural
  networks.
\newblock In {\em International Conference on Machine Learning}, 2017.

\bibitem{SB-JZK-VK:19}
S.~Bai, J.~Z. Kolter, and V.~Koltun.
\newblock Deep equilibrium models.
\newblock In {\em Advances in Neural Information Processing Systems}, 2019.

\bibitem{TC-JBL-VM-EP:21}
T.~Chen, J.~B. Lasserre, V.~Magron, and E.~Pauwels.
\newblock Semialgebraic representation of monotone deep equilibrium models and
  applications to certification.
\newblock In {\em Advances in Neural Information Processing Systems}, 2021.

\bibitem{LPC-JCP:20}
P.~L. Combettes and J-C. Pesquet.
\newblock Lipschitz certificates for layered network structures driven by
  averaged activation operators.
\newblock {\em SIAM Journal on Mathematics of Data Science}, 2(2):529--557,
  2020.

\bibitem{LEG-FG-BT-AA-AYT:21}
L.~{El~Ghaoui}, F.~Gu, B.~Travacca, A.~Askari, and A.~Tsai.
\newblock Implicit deep learning.
\newblock {\em SIAM Journal on Mathematics of Data Science}, 3(3):930--958,
  2021.

\bibitem{MF-MM-GJP:20}
M.~Fazlyab, M.~Morari, and G.~J. Pappas.
\newblock Safety verification and robustness analysis of neural networks via
  quadratic constraints and semidefinite programming.
\newblock {\em IEEE Transactions on Automatic Control}, 2020.

\bibitem{SG-etal:18}
S.~Gowal, K.~Dvijotham, R.~Stanforth, R.~Bunel, C.~Qin, J.~Uesato,
  R.~Arandjelovic, T.~Mann, and P.~Kohli.
\newblock On the effectiveness of interval bound propagation for training
  verifiably robust models.
\newblock {\em arXiv preprint arXiv:1810.12715}, 2018.

\bibitem{SJ-MA-AD-FB-SC:21y}
S.~Jafarpour, M.~Abate, A.~Davydov, F.~Bullo, and S.~Coogan.
\newblock Robustness certificates for implicit neural networks: {A} mixed
  monotone contractive approach.
\newblock In {\em Learning for Dynamics \& Control Conference}, November 2021.
\newblock To appear.

\bibitem{SJ-AD-AVP-FB:21f}
S.~Jafarpour, A.~Davydov, A.~V. Proskurnikov, and F.~Bullo.
\newblock Robust implicit networks via non-{Euclidean} contractions.
\newblock In {\em Advances in Neural Information Processing Systems}, December
  2021.

\bibitem{AK-ZZ-VS:20}
A.~Kag, Z.~Zhang, and V.~Saligrama.
\newblock {RNNs} incrementally evolving on an equilibrium manifold: {A} panacea
  for vanishing and exploding gradients?
\newblock In {\em International Conference on Learning Representations}, 2020.

\bibitem{AM-AM-LS-DT-AV:17}
A.~Madry, A.~Makelov, L.~Schmidt, D.~Tsipras, and A.~Vladu.
\newblock Towards deep learning models resistant to adversarial attacks.
\newblock In {\em International Conference on Machine Learning}, 2018.

\bibitem{CP-EW-JZK:21}
C.~Pabbaraju, E.~Winston, and J.~Z. Kolter.
\newblock Estimating {L}ipschitz constants of monotone deep equilibrium models.
\newblock In {\em International Conference on Learning Representations}, 2021.

\bibitem{PP-AK-JB-PK-FA:21-new}
P.~Pauli, A.~Koch, J.~Berberich, P.~Kohler, and F.~Allgower.
\newblock Training robust neural networks using {L}ipschitz bounds.
\newblock {\em {IEEE} Control Systems Letters}, 6:121--126, 2022.

\bibitem{MR-RW-IRM:20-new}
M.~Revay, R.~Wang, and I.~R. Manchester.
\newblock Lipschitz bounded equilibrium networks.
\newblock {\em arXiv preprint arXiv:2010.01732}, 2020.

\bibitem{CZ-WZ-IS-JB-DE-IG-RF:13}
C.~Szegedy, W.~Zaremba, I.~Sutskever, J.~Bruna, D.~Erhan, I.~Goodfellow, and
  R.~Fergus.
\newblock Intriguing properties of neural networks.
\newblock In {\em International Conference on Learning Representations}, 2014.

\bibitem{AV-KS:18}
A.~Virmaux and K.~Scaman.
\newblock Lipschitz regularity of deep neural networks: analysis and efficient
  estimation.
\newblock In {\em Advances in Neural Information Processing Systems},
  volume~31, page 3839–3848, 2018.

\bibitem{CW-JZK:22}
C.~Wei and J.~Z. Kolter.
\newblock Certified robustness for deep equilibrium models via interval bound
  propagation.
\newblock In {\em International Conference on Learning Representations}, 2022.

\bibitem{EW-ZK:18}
E.~Wong and J.~Z. Kolter.
\newblock Provable defenses against adversarial examples via the convex outer
  adversarial polytope.
\newblock In {\em International Conference on Machine Learning}, pages
  5286--5295, 2018.

\bibitem{LY-NO:19}
L.~Yang and N.~Ozay.
\newblock Tight decomposition functions for mixed monotonicity.
\newblock In {\em 2019 IEEE 58th Conference on Decision and Control (CDC)},
  pages 5318--5322, 2019.

\bibitem{HZ-etal:20}
H.~Zhang, H.~Chen, C.~Xiao, S.~Gowal, R.~Stanforth, Bo~Li, D.~Boning, and C-J.
  Hsieh.
\newblock Towards stable and efficient training of verifiably robust neural
  networks.
\newblock In {\em International Conference on Learning Representations}, 2020.

\end{thebibliography}
\bibliographystyle{plain}

\clearpage
\newpage

\end{document}